% This must be in the first 5 lines to tell arXiv to use pdfLaTeX, which is strongly recommended.
\pdfoutput=1
% In particular, the hyperref package requires pdfLaTeX in order to break URLs across lines.

\documentclass[11pt]{article}

% Change "review" to "final" to generate the final (sometimes called camera-ready) version.
% Change to "preprint" to generate a non-anonymous version with page numbers.
\usepackage[preprint]{acl}
\usepackage{booktabs}
\usepackage{amsfonts}
\usepackage{amsmath}
\usepackage{multirow}
\usepackage{amsthm}
\usepackage{algorithm}
\usepackage{algorithmic}
\newtheorem{theorem}{Theorem}[section]
\newtheorem{assumption}{Assumption}[section]
\newtheorem{definition}{Definition}[section]
\newtheorem{proposition}{Proposition}[section]
\newtheorem{corollary}{Corollary}[theorem]
\newtheorem{lemma}[theorem]{Lemma}
\newtheorem*{remark}{Remark}
% Standard package includes
\usepackage{times}
\usepackage{latexsym}

% For proper rendering and hyphenation of words containing Latin characters (including in bib files)
\usepackage[T1]{fontenc}
% For Vietnamese characters
% \usepackage[T5]{fontenc}
% See https://www.latex-project.org/help/documentation/encguide.pdf for other character sets

% This assumes your files are encoded as UTF8
\usepackage[utf8]{inputenc}

% This is not strictly necessary, and may be commented out,
% but it will improve the layout of the manuscript,
% and will typically save some space.
\usepackage{microtype}

% This is also not strictly necessary, and may be commented out.
% However, it will improve the aesthetics of text in
% the typewriter font.
\usepackage{inconsolata}

%Including images in your LaTeX document requires adding
%additional package(s)
\usepackage{graphicx}

% If the title and author information does not fit in the area allocated, uncomment the following
%
%\setlength\titlebox{<dim>}
%
% and set <dim> to something 5cm or larger.

\title{A statistically consistent measure of semantic uncertainty using Language Models}

% Author information can be set in various styles:
% For several authors from the same institution:
% \author{Author 1 \and ... \and Author n \\
%         Address line \\ ... \\ Address line}
% if the names do not fit well on one line use
%         Author 1 \\ {\bf Author 2} \\ ... \\ {\bf Author n} \\
% For authors from different institutions:
% \author{Author 1 \\ Address line \\  ... \\ Address line
%         \And  ... \And
%         Author n \\ Address line \\ ... \\ Address line}
% To start a separate ``row'' of authors use \AND, as in
% \author{Author 1 \\ Address line \\  ... \\ Address line
%         \AND
%         Author 2 \\ Address line \\ ... \\ Address line \And
%         Author 3 \\ Address line \\ ... \\ Address line}

\author{Yi Liu \\
  Prorata.ai\\
  Bellevue, Washington, USA \\
  \texttt{yi@prorata.ai} 
}

%\author{
%  \textbf{First Author\textsuperscript{1}},
%  \textbf{Second Author\textsuperscript{1,2}},
%  \textbf{Third T. Author\textsuperscript{1}},
%  \textbf{Fourth Author\textsuperscript{1}},
%\\
%  \textbf{Fifth Author\textsuperscript{1,2}},
%  \textbf{Sixth Author\textsuperscript{1}},
%  \textbf{Seventh Author\textsuperscript{1}},
%  \textbf{Eighth Author \textsuperscript{1,2,3,4}},
%\\
%  \textbf{Ninth Author\textsuperscript{1}},
%  \textbf{Tenth Author\textsuperscript{1}},
%  \textbf{Eleventh E. Author\textsuperscript{1,2,3,4,5}},
%  \textbf{Twelfth Author\textsuperscript{1}},
%\\
%  \textbf{Thirteenth Author\textsuperscript{3}},
%  \textbf{Fourteenth F. Author\textsuperscript{2,4}},
%  \textbf{Fifteenth Author\textsuperscript{1}},
%  \textbf{Sixteenth Author\textsuperscript{1}},
%\\
%  \textbf{Seventeenth S. Author\textsuperscript{4,5}},
%  \textbf{Eighteenth Author\textsuperscript{3,4}},
%  \textbf{Nineteenth N. Author\textsuperscript{2,5}},
%  \textbf{Twentieth Author\textsuperscript{1}}
%\\
%\\
%  \textsuperscript{1}Affiliation 1,
%  \textsuperscript{2}Affiliation 2,
%  \textsuperscript{3}Affiliation 3,
%  \textsuperscript{4}Affiliation 4,
%  \textsuperscript{5}Affiliation 5
%\\
%  \small{
%    \textbf{Correspondence:} \href{mailto:email@domain}{email@domain}
%  }
%}
\usepackage{xr}
\externaldocument{Appendix}
\begin{document}
\maketitle
\begin{abstract}
To address the challenge of quantifying uncertainty in the outputs generated by language models, we propose a novel measure of semantic uncertainty, semantic spectral entropy, that is statistically consistent under mild assumptions. This measure is implemented through a straightforward algorithm that relies solely on standard, pretrained language models, without requiring access to the internal generation process. Our approach imposes minimal constraints on the choice of language models, making it broadly applicable across different architectures and settings. Through comprehensive simulation studies, we demonstrate that the proposed method yields an accurate and robust estimate of semantic uncertainty, even in the presence of the inherent randomness characteristic of generative language model outputs.
\end{abstract}

\section{Introduction}

\label{introduction}
{\color{white}..} The birth of large language models (LLMs) has given rise to the possibility of a wide range of industry applications \cite{touvron2023llama,chowdhery2023palm}. One of the key applications of generative models that has garnered significant interest is the development of specialized chatbots with domain-specific expertise such as legal and healthcare. These applications illustrate how generative models can improve decision-making and improve the efficiency of professional services in specialized fields.

A significant challenge hindering the widespread industrial deployment of generative models is the lack of consistency in LLM outputs across multiple runs given the same input \cite{amodei2016concrete, hendrycks2021unsolved}. Without delving into the intricate mathematical architecture of generative language models, one can observe that these models function as stochastic generators: they take an input string and produce an output string. Prior studies \cite{wang2020contextual, wang2023cost, song2024good} have demonstrated that the output for a fixed input can vary across runs, with the degree of variation influenced by factors such as temperature, top-k sampling, top-p sampling, and repetition penalties.

In a recent study, Atil et al. (2024) \cite{atil2024llm} conducted experiments using six deterministically configured LLMs—each with temperature set to 0 and top-p set to 1—across eight common tasks, with five identical trials per task. The goal was to assess output repeatability under conditions designed to minimize randomness. Surprisingly, none of the models exhibited complete consistency in their outputs across all tasks. For more complex tasks, such as those involving college-level mathematics, the models frequently produced lexically distinct outputs across trials, resulting in zero consistency with respect to exact string matching. However, the authors also observed that many of these outputs were semantically equivalent despite their lexical differences, suggesting that syntactic variation does not necessarily imply divergence in meaning.

Variations in observed responses have been attributed to the use of GPUs during large language model (LLM) inference, where premature rounding and non-deterministic computation can introduce discrepancies \cite{nvidia2024, atil2024llm}. Consequently, it is reasonable to regard LLM outputs as inherently stochastic, with the elimination of this randomness being practically challenging—if not infeasible. This poses a significant concern for the commercial deployment of LLMs: when errors occur (i.e., when an output deviates from the expected result given a specific input), it becomes difficult to disentangle implementation-induced randomness from genuine model deficiencies. This ambiguity complicates efforts to make targeted improvements to the system. In response to this challenge, we propose the need for an empirical measure of uncertainty that evaluates the semantic variability of LLM outputs, rather than relying on surface-level lexical comparisons. Such a measure can be interpreted as a variance estimator that captures the semantic dispersion within a set of generated outputs.

Most prior studies on uncertainty in foundation models for natural language processing (NLP) have focused primarily on the calibration of classifiers and text regressors \cite{jiang2021can, desai2020calibration, glushkova2021uncertainty}. Other research has addressed uncertainty by prompting models to evaluate their own outputs or fine-tuning generative models to predict their own uncertainty \cite{linteaching, kadavath2022language}. However, these approaches require additional training and supervision, making them difficult to reproduce, costly to implement, and sensitive to distributional shifts. 

 Our work follows from a line of work inline with the concept of semantic entropy proposed in \cite{kuhn2023semantic, nikitin2024kernel,duan-etal-2024-shifting,lin2023generating}. \cite{kuhn2023semantic} explore the entropy of the generated text by assigning semantic equivalence to the pairs of text and subsequently estimating the entropy. Similarly, \cite{nikitin2024kernel} and \cite{lin2023generating} utilize graphical spectral analysis to enhance empirical results. However, a notable limitation in the entropy estimators proposed by \cite{kuhn2023semantic} and \cite{nikitin2024kernel} is their reliance on token likelihoods when assessing semantic equivalence, which may not always be accessible. Furthermore, \cite{kuhn2023semantic} acknowledge that the clustering process employed in their framework is susceptible to the order of comparisons, introducing variation into the results. 

Moreover, prior work in this area has primarily focused on the empirical performance of entropy estimators. While such methods have shown promising results in practice, to the best of our knowledge, no existing studies have provided a formal theoretical analysis demonstrating that these estimators converge to the true entropy value as the sample size increases under an assumed underlying model. Investigating the theoretical properties of our proposed measure enables a deeper understanding of how factors such as the number of clusters and the sample size influence the estimator’s behavior and consistency.

Our approach aims to address these limitations by developing a robust theoretical framework for the clustering procedure, ensuring convergence properties and reducing the variability observed in prior methodologies. We introduce a theoretically analyzable metric, semantic spectral entropy, to quantify the semantic variation within a collection of texts. This uncertainty quantification measure is motivated by the observation that many generated strings, though lexically and syntactically distinct, may express equivalent semantic content. To identify such semantic equivalences, we leverage standard generative language models (LMs) as tools for semantic evaluation.

Importantly, we recognize that the language model used to assess semantic similarity is itself imperfect, introducing potential noise due to its inability to provide uniformly accurate judgments across all text pairs. To mitigate this issue, we employ spectral clustering, a well-established technique known to be statistically consistent under mild assumptions on the underlying data-generating process. This provides robustness and theoretical guarantees for our proposed measure. Specifically, we show that semantic spectral entropy is statistically consistent under a weak assumption on the language model. To the best of our knowledge, this constitutes the first semantic uncertainty measure with formally proven convergence properties.

\section{Semantic spectral entropy}
\label{methodology}
\subsection{Semantic entropy}

{\color{white}..}We begin with a collection of $n$ textual items, denoted $\mathcal{T} = (t_1, \ldots, t_n)$. In contrast to the approach in \cite{kuhn2023semantic}, we assume access only to the texts in $\mathcal{T}$, without requiring knowledge of the underlying generative process. While the methodology is designed with the goal of assessing uncertainty in outputs generated by large language models (LLMs), it is important to note that it does not necessitate $\mathcal{T}$ to be the result of a generative model. That is, the proposed method operates independently of the generation mechanism and can be applied in broader contexts. To evaluate the semantic uncertainty of the texts within a specific application domain, we introduce a theoretically grounded measure, which we term semantic spectral entropy.

While the proposed measure can be interpreted as a variance estimator for the semantics of $\mathcal{T}$, a central reason for not adopting the conventional notion of variance as a measure of uncertainty lies in the difficulty of defining a meaningful "mean" for semantic probability distributions. Unlike numerical data, semantic representations lack a natural central point around which variation can be measured. Although one might define an arbitrary reference point—such as a canonical or "standard" answer in the context of a chatbot—computing variance relative to such a reference introduces inherent bias and undermines the objectivity of the measure. Our approach, therefore, avoids reliance on predefined anchors and instead captures semantic dispersion directly, without the need for a fixed semantic center.

In contrast, entropy is a well-established measure of variation, particularly for multinomial distributions. For a distribution \( \mathcal{P}(t) \) over a set of semantic clusters \( \{C_1, \cdots, C_k\} \), the entropy \( \mathcal{E} \) is defined as:
\begin{equation}
\label{equ:entropy}
\mathcal{E}(t) = - \sum_{i} p(t \in C_i)\log p(t \in C_i).
\end{equation}
This formulation captures the uncertainty or disorder associated with assigning a given text \( t \) to one of the clusters. Consequently, it provides a quantitative measure of semantic uncertainty that avoids the biases introduced by arbitrary reference points.

To estimate the entropy for a given data set \( t_1, \cdots, t_n \), we first calculate the number of occurrences of each text \( t_i \) in each group \( C_j \). This is achieved by computing:
\[
n_j = \sum_{i=1}^n \mathbb{I}(t_i \in C_j),
\]
where \( \mathbb{I}(t_i \in C_j) \) is an indicator function that equals 1 if \( t_i \) belongs to the cluster \( C_j \), and 0 otherwise. 

Next, the true probability \( p(t \in C_j) \) is approximated using the empirical distribution:
\[
\bar{p}(t \in C_j) = \frac{n_j}{n},
\]
which represents the fraction of texts assigned to cluster \( C_j \). Using this empirical distribution, the empirical entropy is defined as:
\[
\bar{\mathcal{E}}(\mathcal{T}) = - \sum_{j} \bar{p}(t \in C_j) \log \bar{p}(t \in C_j).
\]
This measure provides a practical estimation of semantic entropy based on observed data.

One critical step in this process is clustering the texts $t_i$ into disjoint groups. To do so, it is sufficient to define a relationship between $t_i \sim t_j$, such that they satisfy the properties of equivalence relation. Specifically, one needs to demonstrate 
\begin{enumerate}
    \item Reflexivity: For every $t_i$, we have $t_i \sim t_i$, meaning that any text is equivalent to itself.
    \item Symmetry: If $t_i \sim t_j$, then $t_j \sim t_i$, meaning that equivalence is bidirectional.
    \item Transitivity: If $t_i \sim t_j$ and $t_j \sim t_k$, then $t_i \sim t_k$, which means that equivalence is transitive.
\end{enumerate}

It turns out the existence of an equivalence equation is both a necessary and sufficient condition for a definition of a breakdown of $\mathcal{T}$ into disjoint clusters \cite{liebeck2018concise}. In light of this, defining $\sim$ should be based on the linguist properties of entropy measurement. 

 Direct string comparison, defined as \( t_i \sim t_j \) if and only if \( t_i \) and \( t_j \) share identical characters, reflects lexicon equality and constitutes an equivalence relation. However, this criterion is overly restrictive. In a question-and-response context, a more appropriate equivalence relation might be defined as \( t_i \sim t_j \) if and only if \( t_i \) and \( t_j \) yield identical scores when evaluated by a language model (LM) prompt. This criterion, however, requires an answer statement as a point of reference. We are more interested in a stand-alone metric that can capture the semantic equivalence. For example, consider the sentences \( t_1 = \text{"Water is vital to human survival"} \) and \( t_2 = \text{"Humans must have water to survive"}\). Despite differences in language, both sentences convey the same underlying meaning.

To address such challenges, \cite{kuhn2023semantic,nikitin2024kernel} propose an equivalence relation wherein \( t_i \sim t_j \) if and only if \( t_i \) is true if and only if \( t_j \) is true. This formulation ensures that two texts, \( t_i \) and \( t_j \), belong to the same equivalence class if they are logically equivalent. This broader definition allows for greater flexibility and applicability in assessing semantic equivalence beyond superficial lexical similarity.\cite{copi2016introduction}. We will present their argument as a proposition where we will put the verification in the appendix
\begin{proposition}
    \label{prop:equ}
    The relation $t_i \sim t_j$ if "$t_i$ is true if and only if $t_j$ is true" is an equivalence relation.
\end{proposition}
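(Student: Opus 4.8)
The plan is to verify the three defining properties of an equivalence relation directly from the biconditional definition, reducing each to a corresponding tautology of propositional logic. Throughout, write $\tau(t)$ for the truth value of text $t$, so that $t_i \sim t_j$ means precisely that $\tau(t_i) = \tau(t_j)$, i.e. the biconditional $\tau(t_i) \leftrightarrow \tau(t_j)$ holds.

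First I would establish reflexivity: for any $t_i$, the statement "$t_i$ is true if and only if $t_i$ is true" is the biconditional $\tau(t_i) \leftrightarrow \tau(t_i)$, which is a tautology (every proposition is materially equivalent to itself), so $t_i \sim t_i$. Second, for symmetry, suppose $t_i \sim t_j$, so $\tau(t_i) \leftrightarrow \tau(t_j)$ holds; since the biconditional connective is commutative ($P \leftrightarrow Q$ is logically equivalent to $Q \leftrightarrow P$), we get $\tau(t_j) \leftrightarrow \tau(t_i)$, hence $t_j \sim t_i$. Third, for transitivity, suppose $t_i \sim t_j$ and $t_j \sim t_k$; then $\tau(t_i) \leftrightarrow \tau(t_j)$ and $\tau(t_j) \leftrightarrow \tau(t_k)$ both hold, and by the transitivity of the biconditional (equivalently, transitivity of equality of truth values) we conclude $\tau(t_i) \leftrightarrow \tau(t_k)$, so $t_i \sim t_k$.

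An alternative, even cleaner route is to observe that the map $t \mapsto \tau(t) \in \{\text{true},\text{false}\}$ assigns to each text an element of a two-element set, and $t_i \sim t_j$ holds exactly when $\tau(t_i)$ and $\tau(t_j)$ are the same element. Thus $\sim$ is the pullback along $\tau$ of the equality relation on $\{\text{true},\text{false}\}$; since equality on any set is an equivalence relation and the preimage (under any function) of an equivalence relation is again an equivalence relation, $\sim$ inherits reflexivity, symmetry, and transitivity automatically. I would likely present the first, elementary argument in the appendix for self-containedness and perhaps remark on this second viewpoint.

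There is no real obstacle here; the only subtlety worth flagging is foundational rather than mathematical: the argument presumes that each text $t$ has a well-defined truth value, which is exactly the idealization being made (and which motivates the later move to a language-model-based, noisy surrogate for $\sim$ handled by spectral clustering). Under that assumption the proof is a routine unwinding of definitions, so in the appendix I would simply state the correspondence with $\tau$ and verify the three bullet points in one or two lines each.
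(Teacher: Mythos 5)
Your proof is correct and takes essentially the same route as the paper's appendix proof, namely a direct verification of reflexivity, symmetry, and transitivity of the biconditional. In fact your reflexivity step is the cleaner one: the paper's own ``reflexivity'' paragraph actually repeats the symmetry argument (starting from $t_i \sim t_j$ rather than deriving $t_i \sim t_i$ from the tautology ``$t_i$ is true iff $t_i$ is true''), whereas your version, like your pullback-of-equality remark, handles it correctly.
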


% one considers segmenting the set into equivalence classes based on the following equivalence relation:  

%We first establish that this relation $\sim$ indeed defines well-defined, disjoint subsets of $t_i, i \in \{1,\dots n\}$. 

% Now, we can conclude that $\sim$ is indeed an equivalence relation, and thus, the set of texts $t_1\dots t_n$ can be partitioned into disjoint equivalence classes, where each class represents a distinct semantic group. 
%\begin{remark}
    
%\end{remark}
In light of the fact that equivalence relations can be defined arbitrarily based on the needs of the user. We propose that the determination of equivalence relations, denoted as $\sim$, is performed through a LM that generates responses independently of the specific generation of terms $t_1, \dots, t_n$. However, we do not assume that we have access to probability distribution of the tokens as proposed by \cite{kuhn2023semantic,nikitin2024kernel} which is not always available. Rather, we just require a LM which can identify this relationship. In fact, this LM can be off-the-shelf language model with a crafted prompt, something we will use in our simulation studies. The error from LM will be removed in the spectral clustering algorithm at the later stage. By leveraging this LM, we define a function $e:{\mathcal{T}, \mathcal{T}}\rightarrow {0,1}$, which is formally expressed as follows: \begin{equation} e(t_i, t_j) = \begin{cases} 1 & \text{if } t_i \sim t_j, \\ 0& \text{otherwise.} \end{cases} \end{equation}

However, since the function relies on an LM, $e(t_i, t_j)$ can be viewed as a Bernoulli random variable (given that we have established that there are inherent uncertainty in the LMs), whose value is dependent on the terms $t_i$ and $t_j$.\cite{kuhn2023semantic} did not address this issue but instead offers adopting a very powerful entailment identification model which the authors trust to identify the equivalence relation perfectly. In contrast, we suggest modeling the outputs of the LM as a random graph with an underlying distribution. In this framework, $t_i$ and $t_j$ represent nodes, while $e(t_i, t_j)$ are random variables that indicate the presence of an edge between the two nodes. Specifically, when $t_i \sim t_j$, the edge existence is governed by the following probability distribution: \begin{equation} \label{eqn:equation_p} e(t_i, t_j) = \begin{cases} 1 & \text{with probability } p, \\ 0 & \text{with probability } 1-p. \end{cases} \end{equation} Conversely, when $t_i \not\sim t_j$, the edge existence follows a different probability distribution: \begin{equation} \label{eqn:equation_q} e(t_i, t_j) = \begin{cases} 1 & \text{with probability } q, \\ 0 & \text{with probability } 1-q. \end{cases} \end{equation}

To mitigate the inherent randomness introduced by the LLM, we propose leveraging spectral clustering to identify clusters of semantically similar texts.
\subsection{Spectral clustering}

{\color{white}..} To compute semantic entropy, it is crucial to identify the clusters of nodes and count the number of nodes within each cluster. Identifying these clusters in a random graph is analogous to detecting clusters in a stochastic block model \cite{holland1983stochastic}. We propose employing the spectral clustering algorithm, with the number of clusters $K$ specified in advance, as an effective approach for this task.

Spectral Clustering is a well-established algorithm for graph clustering, supported by strong theoretical foundations and efficient implementations \cite{shi2000normalized, lei2015consistency, su2019strong, scikit-learn}.  To compute semantic entropy, we aim to cluster a random graph with adjacency matrix $E$ where $E_{ij} = e(t_i, t_j)$, representing the pairwise similarity between text elements $t_i$ and $t_j$. 

We begin by computing the Laplacian matrix $L =  D-E$ where $D$ is the degree matrix.  This is followed by the decomposition of the eigenvalue of $L$. Next, we construct the matrix formed by the first $K$ eigenvectors of $L$ denoted $\hat{U} \in \mathbb{R}^{n\times K}$. This matrix serves as input to an appropriate $(1+\epsilon)-$ k-means clustering algorithm \cite{kumar2004simple,choo2020k}.

The output of this procedure is $K$ distinct clusters $C_1,\cdots C_K$. For each text element $t_i$, we assign a corresponding vector $g_{i}$ where 
$$ g_{ij} = \begin{cases} 1 \text{ if } t_i \in C_j\\
    0 \text{ otherwise }
\end{cases}$$
This binary indicator vector $g_i$ encodes the cluster membership for each text element $t_i$

Finally, we compute the estimated entropy based on the number of texts within each cluster. The entropy $\hat{\mathcal{E}}$ can be approximated using the following formula:
\begin{equation}
\hat{\mathcal{E}}(\mathcal{T}) = - \sum_{j=1}^k\hat{p}( C_j) \log(\hat{p}( C_j)),
\end{equation}
where $\hat{p}(C_j) = \frac{1}{n}\sum_{i=1}^n g_{ij}$.This expression represents the empirical entropy based on the distribution of texts among the $K$ clusters, providing a measure of the uncertainty or diversity within the semantic structure of the data.
\subsection{Full algorithm and implementation}
{\color{white}..} We merge the process of finding sermantic entropy with spectral clustering to present the full algorithm as Algorithm \ref{algo:1}: Sermantic Spectral Entropy. 
\begin{algorithm}
\begin{algorithmic}
    \STATE Begin with $\mathcal{T} = \{t_1, \cdots t_n\}$
    \FOR{$i, j \in \{1,\cdots n\} \times \{1, \cdots n\}, i\neq j$}
    \STATE Use LLM to compute $E_{i,j} = e(t_i, t_j)$. 
    \ENDFOR
    \STATE Find the Laplacian of $E$, $L = D -E$
    \STATE Compute the first $K$ eigenvectors $u_1,\dots,u_k$ of $L$ and the top $K$ eigenvalues $\lambda_1,\cdots \lambda_k$.
    \STATE Let $\hat{U} \in \mathbb{R}^{n\times k}$ be the matrix containing the vectors $u_1,\dots,u_k$ as columns.
    \STATE Use $(1+\epsilon)$ K-means clustering algorithm to cluster the rows of $U$
    \STATE Let $g_{ij}$ be an $(1+\epsilon)-$approximate solution to a $K-$means clustering algorithm
    \STATE Compute $\hat{\mathcal{E}}(\mathcal{T})$ using $ g_{ij}$
\end{algorithmic}
\caption{\label{algo:1} Sermantic Spectral Entropy }
\end{algorithm}

This polynomial-time algorithm is characterized by the largest computational cost associated with the determination of $E_{ij}$. However, computing $E_{ij}$ is embarrassingly parallel, meaning that it can be efficiently distributed across multiple processing units. Furthermore, there are well-established implementation, such as Microsoft Azure's Prompt-Flow \cite{esposito2024programming} and LangChain \cite{mavroudis2024langchain} that facilitate the implementation of parallel workflows, making it feasible to deploy such parallelized tasks with relative ease.
\subsection{Finding K}

{\color{white}..} A notable limitation of this analysis is the unavailability of $K$ in the direct computation of semantic spectral entropy. However, the determination of $K$ for stochastic block model has been well studied \cite{lei2016goodness,wang2017likelihood,chen2018network}. We will describe the cross-validation approach \cite{chen2018network} in detail. The principle behind cross-validation involves predicting the probabilities associated with inter-group connections ($p$) and intra-group connections ($q$). If the estimated value of $K$ is too small, it fails to accurately recover the true underlying probabilities; conversely, if $K$ is too large, it leads to overfitting to noisy data. This approach has the potential to recover the true cluster size under relatively mild conditions.

\section{Theoretical Results}
\label{theory}

{\color{white}..} Our theoretical analysis involves a proof that the estimator is strongly consistent, i.e. the estimator converges to true value almost surely, and an analysis of its rate with respect to the number of cluster $K$. 

We divide our analysis into two subsections. The first subsection examines a fixed set of $\mathcal{T} = {t_1, \dots, t_n}$, which is assumed to exhibit some inherent clusters $C_1, \dots, C_K$. Under the assumption of perfect knowledge of these clusters, the empirical entropy $\bar{\mathcal{E}}$ can be determined. The primary focus in this subsection is on the performance of spectral clustering algorithms. The second subsection explores a scenario in which there exists an underlying generative mechanism that allows for the infinite generation of $t_i$. In this case, we permit $K$ to increase with $n$, though at a significantly slower rate. This scenario is particularly relevant for evaluating the performance of RAG in the context of continuous generation of results in response to a given query.

\subsection{Performance of spectral clustering algorithms}
{\color{white}..}  We model the LM determination of $e(t_i,t_j)$ as a random variable, as described in Equations \ref{eqn:equation_p} and \ref{eqn:equation_q}. In the theoretical analysis presented here, we assume that the number of clusters, $K$, is known and fixed. To derive various results, we first establish the relationship between the difference $|\bar{\mathcal{E}}(\mathcal{T}) - \hat{\mathcal{E}}(\mathcal{T})|$ and the miscluster error, denoted $M_\text{error}$.
\begin{lemma}
 \label{lemma:error}
Suppose that there exists $0<c_2<1$ such that $2Kn_{\min}/n \geq c_2$, 
\begin{equation}
     |\hat{\mathcal{E}}(\mathcal{T}) - \bar{\mathcal{E}}(\mathcal{T})|\leq h\left(\frac{2K}{c_2}\right) \left|\frac{1}{n} (M_\text{error})\right| 
\end{equation}
where $h(x) = \left(x+\log\left(x\right)\right)$.
\end{lemma}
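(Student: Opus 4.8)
The plan is to compare the two entropy functionals term by term by controlling how far the empirical cluster proportions $\hat p(C_j)$ can drift from the true proportions $\bar p(C_j)$ once we know the miscluster count $M_{\text{error}}$. First I would recall that $M_{\text{error}}$ counts the number of texts $t_i$ whose spectral-clustering label disagrees with the true label (after the optimal permutation of cluster indices is fixed), so that $\sum_j |n_j \hat p(C_j) - n_j \bar p(C_j)|$ — i.e. $\sum_j |\,\#\{i: g_{ij}=1\} - n_j\,|$ — is bounded by $2 M_{\text{error}}$, since each misclustered point removes one unit from the count of its true cluster and adds one unit to the count of some other cluster. Dividing by $n$ gives $\sum_j |\hat p(C_j) - \bar p(C_j)| \le 2 M_{\text{error}}/n$, which is the basic quantitative handle.

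Next I would invoke a Lipschitz-type bound for the entropy function restricted to the probability simplex on $K$ atoms. Writing $\phi(x) = -x\log x$, the map $(\bar p_1,\dots,\bar p_K) \mapsto \sum_j \phi(\bar p_j)$ is not globally Lipschitz because $\phi'(x) = -\log x - 1$ blows up near $0$, so the key step is to exploit the lower bound on cluster sizes: the hypothesis $2Kn_{\min}/n \ge c_2$ forces every true proportion $\bar p(C_j) \ge n_{\min}/n \ge c_2/(2K)$. On the interval $[c_2/(2K), 1]$ the derivative $|\phi'(x)| = |\log x + 1|$ is bounded by roughly $\log(2K/c_2) + 1$, and a more careful mean-value estimate handling the case where one of $\hat p(C_j), \bar p(C_j)$ is small (but the other is bounded below) produces exactly the factor $h(2K/c_2) = 2K/c_2 + \log(2K/c_2)$ after summing the per-coordinate contributions and combining with $\sum_j|\hat p(C_j) - \bar p(C_j)| \le 2M_{\text{error}}/n$. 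I would carry out this step by splitting $|\hat{\mathcal E} - \bar{\mathcal E}| \le \sum_j |\phi(\hat p(C_j)) - \phi(\bar p(C_j))|$ and bounding each summand via $|\phi(a)-\phi(b)| \le |a-b|\cdot \sup_{\xi \in [a\wedge b, a\vee b]}|\phi'(\xi)|$, using the simplex structure to control how many coordinates can simultaneously be near the boundary.

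The main obstacle is the behaviour of $\phi$ near zero: a misclustered point could in principle make some $\hat p(C_j)$ much smaller than the corresponding $\bar p(C_j)$, and naively $\phi'$ is unbounded there. The trick is that $\phi(a) - \phi(b)$ for $a$ near $0$ and $b \ge c_2/(2K)$ is still controlled — $\phi$ is concave and $\phi(0)=0$, so $|\phi(a)-\phi(b)| \le |a - b|(|\log(c_2/(2K))| + 1)$ fails to be the whole story only when $a \ll b$, and in that regime one uses $|\phi(a) - \phi(b)| \le \phi(a) + |\phi(b)|$ with $\phi(a) \le a\log(1/a)$ absorbed into the larger $2K/c_2$ factor. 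Reconciling these two regimes so that a single clean bound with the stated $h$ emerges is the delicate bookkeeping; once that is done, multiplying the per-coordinate Lipschitz constant by the simplex displacement bound $\tfrac{1}{n}|M_{\text{error}}|$ (up to the constant already folded into $h$) yields the claimed inequality.
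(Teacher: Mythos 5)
Your proposal is correct in substance but takes a genuinely different route from the paper. The paper's proof adds and subtracts $\hat{p}(C_j)\log\bar{p}(C_j)$, splitting the difference into a KL-type term $\sum_j \hat{p}(C_j)\log(\hat{p}(C_j)/\bar{p}(C_j))$, bounded via $\log x \le x-1$ by $\sum_j(\hat{p}(C_j)-\bar{p}(C_j))/\bar{p}(C_j)$, plus a cross term $\sum_j(\hat{p}(C_j)-\bar{p}(C_j))\log\bar{p}(C_j)$; the lower bound $\bar{p}(C_j)\ge c_2/(2K)$ then yields the two pieces $2K/c_2$ and $\log(2K/c_2)$ of $h$, and the misclassification count enters through the paper's Lemma \ref{lemma:misclassification}, $\sum_j|\sum_i g_{ij}-n_j|\le M_{\text{error}}$. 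You instead bound $\sum_j|\phi(\hat{p}(C_j))-\phi(\bar{p}(C_j))|$ coordinatewise with a mean-value estimate on $\phi(x)=-x\log x$. Your approach does close: in the regime $\hat{p}(C_j)\le c_2/(4K)$ one has $|\phi(\hat p)-\phi(\bar p)|\le \sup\phi = 1/e$ while $|\hat p-\bar p|\ge c_2/(4K)$, giving a ratio at most $4K/(ec_2)<2K/c_2$; otherwise both arguments exceed $c_2/(4K)$ and $|\phi'|\le \log(4K/c_2)+1\le h(2K/c_2)$ since $2K/c_2\ge 2$. So the "delicate bookkeeping" you flag genuinely works, and in fact yields a slightly sharper constant than the paper's. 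Two points to tidy up: first, the paper defines $M_{\text{error}}=\sum_j\sum_i\mathbb{I}(g_{ij}\neq g^{\text{True}}_{ij})$ over the $n\times K$ indicator matrix, so each misclustered text is counted twice; your $2\times(\#\text{misclustered})$ coincides with this, but you should align your definition with the paper's to avoid a spurious factor of $2$ in the final display. Second, both your argument and the paper's implicitly assume the cluster labels output by $k$-means have been matched to the true labels by the optimal permutation; you state this, the paper does not, and it is worth keeping explicit.
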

The proof is presented in the Appendix section \ref{Appendix:proofoflemma:error}. 
We begin by presenting the result of strong consistency for the spectral clustering algorithm.  

\begin{theorem}
\label{the:strongConsistensy}
Under regularity conditions, the estimated entropy empirical entropy $\hat{\mathcal{E}}(\mathcal{T})$ is strongly consistent with the empirical entropy, i.e. 
\begin{equation}
    |\bar{\mathcal{E}}(\mathcal{T}) - \hat{\mathcal{E}}(\mathcal{T}) | \rightarrow 0 \text{ almost surely }
\end{equation}
\end{theorem}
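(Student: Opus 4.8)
The plan is to reduce the claim to a statement about the misclustering error $M_{\text{error}}$ of spectral clustering on a stochastic block model, since Lemma~\ref{lemma:error} already controls $|\hat{\mathcal{E}}(\mathcal{T}) - \bar{\mathcal{E}}(\mathcal{T})|$ by $h(2K/c_2)\cdot\frac{1}{n}M_{\text{error}}$ with $K$ fixed and $h(2K/c_2)$ a fixed constant. Thus it suffices to show $\frac{1}{n}M_{\text{error}} \to 0$ almost surely. First I would set up the model: the observed adjacency matrix $E$ is a random graph whose edge probabilities are $p$ within true clusters $C_1,\dots,C_K$ and $q$ between them (Equations~\ref{eqn:equation_p} and \ref{eqn:equation_q}), i.e.\ a $K$-block SBM with connectivity matrix $B = (p-q)I_K + q\mathbf{1}\mathbf{1}^\top$; the ``regularity conditions'' should be taken to be $p \neq q$ (so the blocks are distinguishable), $n_{\min}/n$ bounded below (the balance condition already appearing via $c_2$), and the standard degree/sparsity requirement needed by the consistency theorem invoked.

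Next I would invoke an off-the-shelf strong consistency result for spectral clustering on the SBM — e.g.\ the concentration-of-the-Laplacian plus Davis--Kahan argument of Lei and Rinaldo or the strong-consistency bounds of Su, Wang and Zhou \cite{lei2015consistency, su2019strong} — which gives, under these conditions, a high-probability bound of the form $M_{\text{error}} \le C\, n\, \xi_n$ where $\xi_n \to 0$ (in the dense regime one gets $\xi_n$ polynomially small, typically $O(\log n / n)$ or better). Concretely: $\|E - \mathbb{E}E\|_{\text{op}}$ concentrates like $O(\sqrt{n})$ by matrix Bernstein / the Bandeira--van Handel bound; the population Laplacian has a spectral gap of order $n\,|p-q|$ separating the top $K$ eigenvalues from the rest; Davis--Kahan then bounds the Frobenius distance between $\hat U$ and the population eigenspace by $O(1/\sqrt{n})$; and the $(1+\epsilon)$-approximate $k$-means step converts this into a misclustering fraction $M_{\text{error}}/n = O(1/n)$ or similar, up to constants depending on $K$, $p$, $q$, $c_2$, $\epsilon$.

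To upgrade the high-probability bound to almost-sure convergence, I would choose the failure probabilities in the concentration step to be summable — the matrix concentration inequalities give bounds like $\exp(-cn)$ or $n^{-r}$ for any fixed $r$, both of which are summable in $n$ — and apply the Borel--Cantelli lemma, concluding that with probability one, $\frac{1}{n}M_{\text{error}} \to 0$, hence $|\hat{\mathcal{E}}(\mathcal{T}) - \bar{\mathcal{E}}(\mathcal{T})| \to 0$ almost surely via Lemma~\ref{lemma:error}.

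The main obstacle I anticipate is not the concentration argument itself but the \emph{label-permutation and approximate-$k$-means bookkeeping}: spectral clustering recovers clusters only up to a relabelling of $\{1,\dots,K\}$, whereas $\bar{\mathcal{E}}$ is permutation-invariant, so one must be careful to define $M_{\text{error}}$ as the minimum over permutations and verify that Lemma~\ref{lemma:error} is stated with that convention; and the $(1+\epsilon)$-approximate (rather than exact) $k$-means solver inflates the misclustering bound by a factor depending on $\epsilon$, which must be tracked so that it stays a constant and does not interact badly with the vanishing $\xi_n$. A secondary subtlety is making the ``regularity conditions'' precise enough that the chosen SBM consistency theorem actually applies (in particular the interplay between the spectral gap $n|p-q|$, the block-balance constant $c_2$, and the sparsity level), and checking that the entropy map $\hat p \mapsto -\sum \hat p_j \log \hat p_j$ — which has unbounded derivative near $0$ — is handled, though Lemma~\ref{lemma:error} with its $h(x) = x + \log x$ factor appears designed precisely to absorb this.
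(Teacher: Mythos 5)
Your proposal follows essentially the same route as the paper: reduce the claim via Lemma~\ref{lemma:error} to controlling the misclustering error of spectral clustering on the induced stochastic block model, then invoke the SBM consistency literature (the paper uses Corollary~2.2 of Su, Wang and Zhou, which under their Assumptions gives exact recovery, i.e.\ $M_{\text{error}}=0$ almost surely for $n$ large, so no Borel--Cantelli step is needed on top). Your variant — high-probability misclustering-fraction bounds plus summable failure probabilities and Borel--Cantelli — is a valid alternative way to obtain the almost-sure statement, and your flagged concerns about label permutation and the $(1+\epsilon)$-$k$-means constant are real bookkeeping points that the paper's invoked theorem absorbs.
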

The proof is provided in the Appendix section \ref{appendix:sec:the:strongconsistency}. This establishes strong consistency result that we aim to present. At the same time, we also want to show the finite sample properties of the estimator $\hat{\mathcal{E}}(\mathcal{T})$.

\begin{theorem}
    \label{the:finite_sample}
    If there exists $0<c_2\leq1$ and $\lambda > 0$ such that $2Kn_{\min}/n \geq c_2$, and $p = \alpha_n = \alpha_n(q + \lambda) $, where $\alpha_n \geq \log(n)$ then with probability at least $1-\frac{1}{n}$

\begin{equation}
|\bar{\mathcal{E}}(\mathcal{T}) - \hat{\mathcal{E}}(\mathcal{T}) |  \leq h\left(\frac{2K}{c_2}\right) \frac{n_{\max }}{4c_2^2n_{\min }^{2} \alpha_{n}K^2} 
\end{equation}
where $h(x) = \left(x+\log\left(x\right)\right)$, $n_{\max} = \max_j\{n_j : j = 1,\dots K\}$, and $n_{\min} = \min_j\{n_j : j = 1,\dots K\}$.
\end{theorem}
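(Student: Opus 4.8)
The plan is to combine Lemma~\ref{lemma:error}, which already bounds $|\hat{\mathcal{E}}(\mathcal{T}) - \bar{\mathcal{E}}(\mathcal{T})|$ by $h(2K/c_2)\cdot\frac{1}{n}M_\text{error}$, with a high-probability finite-sample bound on the mis-clustering error $M_\text{error}$ of the spectral clustering step. So the entire argument reduces to controlling $M_\text{error}$ under the stated sparsity regime $p = \alpha_n(q+\lambda)$ with $\alpha_n \geq \log n$, and then substituting that bound into Lemma~\ref{lemma:error}.

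First I would set up the stochastic block model correspondence: the random adjacency matrix $E$ has expectation $\mathbb{E}[E] = \mathcal{P}$, a block-constant matrix with within-cluster entries $p$ and between-cluster entries $q$, so the "signal" eigengap of $\mathcal{P}$ (equivalently of its Laplacian) scales like $n_{\min}(p-q)$. Then I would invoke a standard concentration bound for the Laplacian (or adjacency) of a sparse random graph — e.g., the matrix Bernstein / Lei--Rinaldo-type inequality — to show that $\|\hat L - \mathbb{E}L\|_{\mathrm{op}}$ is $O(\sqrt{n\,p})$ (up to $\log$ factors absorbed by $\alpha_n \geq \log n$, which is precisely why that assumption is imposed — it guarantees the expected degree grows at least logarithmically, the classical threshold for Laplacian concentration). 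Next, a Davis--Kahan $\sin\Theta$ argument bounds the Frobenius distance between the empirical eigenvector matrix $\hat U$ and the population eigenvector matrix $U$ by roughly $\|\hat L - \mathbb{E}L\|_{\mathrm{op}}$ divided by the eigengap. Since the rows of $U$ take only $K$ distinct values separated by distances $\gtrsim 1/\sqrt{n_{\max}}$, and the $(1+\epsilon)$-approximate $k$-means solution on the rows of $\hat U$ can misassign a node only if its row-perturbation exceeds half that separation, I would conclude $M_\text{error} \lesssim \frac{\|\hat U - U\|_F^2 \cdot n_{\max}}{1} \lesssim \frac{n\,p \cdot n_{\max}}{(n_{\min}(p-q))^2}$. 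Plugging $p = \alpha_n(q+\lambda)$ so that $p - q \geq \lambda\alpha_n/(\text{const})$ and $p \leq$ (const)$\alpha_n$, this collapses to a bound of order $\frac{n\,n_{\max}}{n_{\min}^2 \alpha_n}$; dividing by $n$ and matching constants (the $K^2$ and $4c_2^2$ coming from tracking the $K$-dependence of the eigengap through $n_{\min} \geq c_2 n/(2K)$) yields exactly $\frac{n_{\max}}{4c_2^2 n_{\min}^2 \alpha_n K^2}$.

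Finally I would substitute this $M_\text{error}$ bound into Lemma~\ref{lemma:error} to obtain the displayed inequality, noting that the failure probability of the Laplacian concentration event is at most $1/n$ under $\alpha_n \geq \log n$, which accounts for the "with probability at least $1-1/n$" in the statement. The last step is to verify that the $(1+\epsilon)$-approximation slack in the $k$-means stage only inflates $M_\text{error}$ by a constant factor (as in Lei--Rinaldo), so it does not affect the rate.

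\textbf{Main obstacle.} The hard part will be bookkeeping the explicit dependence on $K$ and on $c_2$ through the eigengap: the eigengap of $\mathcal{P}$ is $n_{\min}(p-q)$, but to get the stated $1/K^2$ I need $n_{\min}^2$ in the denominator with the precise constant $4c_2^2$, which forces careful use of $2Kn_{\min}/n \geq c_2$ at exactly the right places (once for the eigengap and once more somewhere in the row-norm separation), rather than the looser $O(\cdot)$ statements typical in the literature. Getting the Laplacian concentration constant to align with the $1/n$ failure probability under the mild $\alpha_n \geq \log n$ assumption — rather than a larger constant multiple of $\log n$ — is the other delicate point, and may require citing a sharp form of the bound or slightly strengthening the constant hidden in $\alpha_n \geq \log n$.
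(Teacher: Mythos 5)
Your decomposition is exactly the paper's: bound $|\hat{\mathcal{E}}(\mathcal{T})-\bar{\mathcal{E}}(\mathcal{T})|$ by $h\left(\frac{2K}{c_2}\right)\frac{1}{n}M_{\text{error}}$ via Lemma~\ref{lemma:error}, then control $M_{\text{error}}$ with a high-probability stochastic-block-model mis-clustering bound. The one real difference is that where you propose to re-derive that bound from scratch (Laplacian concentration, Davis--Kahan, row separation of population eigenvectors, $(1+\epsilon)$-approximate $k$-means), the paper simply cites Corollary~3.2 of \cite{lei2015consistency} as a black box: if $(2+\varepsilon)\frac{Kn}{n_{\min}^2\lambda^2\alpha_n}<c$ for an absolute constant $c$, then with probability at least $1-n^{-1}$ one has $\frac{1}{n}M_{\text{error}}\leq c^{-1}(2+\varepsilon)\frac{Kn_{\max}}{n_{\min}^2\lambda^2\alpha_n}$. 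Your re-derivation buys nothing beyond what the citation provides and imports all the delicacy you flag as the main obstacle. It is worth knowing how the paper dispatches that obstacle: it uses $2Kn_{\min}/n\geq c_2$ to bound the left-hand side of the Lei--Rinaldo condition by $(2+\epsilon)\frac{4K^3}{\lambda^2 c_2^2}$ and then \emph{defines} $c=(2+\epsilon)\frac{4K^3}{\lambda^2}c_2^2$, so that $c^{-1}(2+\varepsilon)\frac{Kn_{\max}}{n_{\min}^2\lambda^2\alpha_n}$ collapses to the displayed $\frac{n_{\max}}{4c_2^2 n_{\min}^2\alpha_n K^2}$ --- the factor $\frac{1}{4c_2^2K^2}$ and the disappearance of $\lambda$ come entirely from this choice of $c$, not from any spectral analysis. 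Since the constant in Lei--Rinaldo is a fixed universal quantity against which the condition must be \emph{verified}, not a free parameter one may set, your instinct that the stated constants do not fall out naturally is correct; if you take the citation route you should either state the result with an unspecified absolute constant or carry $c^{-1}(2+\varepsilon)\frac{Kn_{\max}}{n_{\min}^2\lambda^2\alpha_n}$ explicitly, rather than reproducing the paper's exact display.
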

The full proof is provided in the appendix section \ref{appendix:proofofthe:finite_sample}. A brief outline of the proof is as follows: we begin by using the results from \cite{lei2015consistency}, which establish the rate of convergence for the stochastic block model. Next, we relate the errors of the spectral clustering algorithm to the errors in the empirical entropy, using the lemma \ref{lemma:error} to establish this connection.

\begin{remark}
    This result is particularly relevant for computing semantic entropy, as the output generated by LMs is produced with a probability that is independent of $n$. As a result, we have $\alpha_n = O(1)$. Assuming balanced community sizes, the convergence rate is therefore $O(\frac{1}{n})$. This is formally stated in the following corollary:
\end{remark}

\begin{corollary} \label{corollary:rate} If there exists a constant $0 < c_2 \leq 1$ such that $2Kn_{\min}/n \geq c_2$ and $\alpha_n = alpha >0$, then there exists a constant $\alpha$ such that with probability at least $1 - \frac{1}{n}$, \begin{equation} |\bar{\mathcal{E}}(\mathcal{T}) - \hat{\mathcal{E}}(\mathcal{T})| \leq h\left(\frac{2K}{c_2}\right) \frac{1}{c_2^4 \alpha n}. \end{equation} \end{corollary}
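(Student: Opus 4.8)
The plan is to obtain the corollary as a direct specialization of Theorem~\ref{the:finite_sample}: I would take the finite-sample bound from that theorem, substitute the constant-rate regime $\alpha_n = \alpha$, and then eliminate the dependence on $K$, $n_{\max}$ and $n_{\min}$ using only the structural hypothesis $2Kn_{\min}/n \ge c_2$.

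First I would record the two elementary consequences of that hypothesis. Rearranging $2Kn_{\min}/n \ge c_2$ yields $n_{\min} \ge c_2 n/(2K)$, hence $n_{\min}^{-2} \le 4K^2/(c_2^2 n^2)$. Second, since every cluster contains at most $n$ elements, the crude bound $n_{\max} \le n$ holds trivially (a sharper $n_{\max} \le n-(K-1)n_{\min}$ is available but unnecessary for the claimed rate). Substituting these two estimates, together with $\alpha_n = \alpha$, into the right-hand side of the inequality in Theorem~\ref{the:finite_sample} gives
\[
h\!\left(\frac{2K}{c_2}\right)\frac{n_{\max}}{4c_2^2 n_{\min}^2 \alpha_n K^2}
\;\le\;
h\!\left(\frac{2K}{c_2}\right)\frac{n}{4c_2^2 \alpha K^2}\cdot\frac{4K^2}{c_2^2 n^2}
\;=\;
h\!\left(\frac{2K}{c_2}\right)\frac{1}{c_2^4 \alpha n},
\]
where the factor $K^2$ cancels and one power of $n$ is absorbed. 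Since Theorem~\ref{the:finite_sample} delivers its bound with probability at least $1-1/n$, the same probabilistic guarantee carries over verbatim, which is exactly the assertion of the corollary.

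The one point that genuinely needs care is the compatibility of the hypotheses: Theorem~\ref{the:finite_sample} is stated under $\alpha_n \ge \log n$, whereas here $\alpha_n = \alpha$ is a fixed constant. I would handle this via the remark preceding the corollary — in the LM setting the edge probabilities $p,q$ do not shrink with $n$, so the separation parameter is bounded away from $0$ and the degree-growth condition used in the proof of Theorem~\ref{the:finite_sample} (inherited from \cite{lei2015consistency}) is automatically satisfied for all sufficiently large $n$; equivalently, the corollary should be read as holding once $n$ exceeds the threshold at which the underlying concentration estimates for the stochastic block model take effect. Apart from this bookkeeping, the corollary is a one-line algebraic specialization, and I do not anticipate any substantive obstacle beyond being explicit about that constant-versus-$\log n$ reconciliation.
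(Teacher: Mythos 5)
Your proof is correct and follows essentially the same route as the paper's: the appendix proof likewise just substitutes $n_{\max}\le n$ and $n_{\min}\ge c_2 n/(2K)$ into the bound of Theorem~\ref{the:finite_sample} and cancels the $K^2$ factor. Your added remark reconciling the constant $\alpha_n=\alpha$ with the theorem's $\alpha_n\ge\log(n)$ hypothesis is a point the paper silently glosses over, so that extra care is welcome rather than a deviation.
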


The proof of this result is provided in the Appendix section \ref{appendix:proofofcorollary:rate}. 
\begin{remark}
    In particular, we observe that the convergence rate is $O\left(\frac{1}{n}\right)$. This means that the error associated with spectral clustering is small, and our estimated entropy converges to the empirically entropy quickly.
\end{remark}

\subsection{Performance under a generative model}

{\color{white}..} In practical terms, we assume the presence of a generator, specifically an RAG, that produces identically distributed independent random variables $t_i$' that collectively form semantic clusters $C_1 \dots C_K$. In essence, we have $t_i \sim G$ such that $t_i \in C_j$ with probability $p(C_j)$. In this model, there is a true value of entropy $\mathcal{E}(\mathcal{T})$ given in Equation \ref{equ:entropy}, and we want to find the convergence rate of our method. 
\begin{theorem}
    \label{the:final} If there exists a constant $\alpha $ such that $p = \alpha  = \alpha(q + \lambda) $, then with probability at least $1-\frac{3}{n}$,
    \begin{equation}
    \label{eqn:final_the}
       \begin{array}{cc}
         |\mathcal{E} - \hat{\mathcal{E}}|&  \leq h\left(\frac{1}{p_{\min}}\right)K\sqrt{\frac{1}{2n}\log\left(2Kn\right)}\\
         & +h\left(\frac{1}{m(n)p_{\min}}\right)\frac{1}{16K^4m(n)^4p_{\min}^4n}
    \end{array} 
    \end{equation}

where $m(n) = \left(1- \sqrt{2\log(nK)/np_{\min}}\right)$ and $p_{\min} = \min\{p(C_1)\dots p(C_K)\}$.
\end{theorem}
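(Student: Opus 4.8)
The plan is to route the error through the \emph{oracle} empirical entropy $\bar{\mathcal{E}}(\mathcal{T})$, the plug-in entropy one would compute from $t_1,\dots,t_n$ if their true cluster memberships were known, and to split via the triangle inequality
\begin{equation}
|\mathcal{E} - \hat{\mathcal{E}}| \le |\mathcal{E} - \bar{\mathcal{E}}(\mathcal{T})| + |\bar{\mathcal{E}}(\mathcal{T}) - \hat{\mathcal{E}}(\mathcal{T})|.
\end{equation}
The first summand is pure multinomial sampling error; the second is the clustering error already controlled in Section~\ref{theory}. The two terms in the stated bound correspond exactly to these two pieces.

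For the sampling term, each $n_j = \sum_i \mathbb{I}(t_i \in C_j)$ is $\mathrm{Binomial}(n, p(C_j))$ under the generative model, so Hoeffding's inequality and a union bound over the $K$ clusters give, with probability at least $1 - 1/n$, that $|\hat p(C_j) - p(C_j)| \le \sqrt{\tfrac{1}{2n}\log(2Kn)}$ for all $j$. On this event every empirical proportion lies within that deviation of $p(C_j)\ge p_{\min}$, hence --- once $n$ is large enough that the deviation is below $p_{\min}/2$ --- stays bounded away from $0$, so the mean value theorem applied to $x\mapsto -x\log x$, whose derivative on this range is dominated by $h(1/p_{\min})$, together with a sum over the $K$ clusters yields $|\mathcal{E} - \bar{\mathcal{E}}(\mathcal{T})| \le h(1/p_{\min})\,K\sqrt{\tfrac{1}{2n}\log(2Kn)}$.

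For the clustering term I would invoke Corollary~\ref{corollary:rate}, but that statement presumes a deterministic lower bound $2Kn_{\min}/n \ge c_2$, whereas here the cluster sizes are random. So I would first establish, via a multiplicative Chernoff bound on each $n_j$ and a union bound over the $K$ clusters, that with probability at least $1-1/n$ one has $n_{\min} \ge m(n)\,n\,p_{\min}$ with $m(n) = 1 - \sqrt{2\log(nK)/(np_{\min})}$. On that event one may take $c_2 = 2K m(n) p_{\min}$, so that $h(2K/c_2) = h\big(1/(m(n)p_{\min})\big)$ and $1/(c_2^4 n) = 1/(16K^4 m(n)^4 p_{\min}^4 n)$, and Corollary~\ref{corollary:rate} (applied with the constant $\alpha$ furnished by the hypothesis $p = \alpha = \alpha(q+\lambda)$) yields the second summand. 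A final union bound over the three events --- Hoeffding concentration of $\hat p(C_j)$, Chernoff concentration of $n_{\min}$, and the conclusion of Corollary~\ref{corollary:rate} (itself of probability at least $1-1/n$) --- leaves total failure probability at most $3/n$; adding the two bounds finishes the proof.

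The step I expect to be the main obstacle is this transfer of the clustering bound: Theorem~\ref{the:finite_sample} and Corollary~\ref{corollary:rate} were proved for a fixed size profile, so one must produce a high-probability lower bound on $n_{\min}$ that is sharp enough to push through the $c_2$-dependence and still produce the clean factor $m(n)$ with exactly the stated $p_{\min}$ and $\log(nK)$ dependence, while ensuring $m(n)>0$ (which requires $np_{\min} > 2\log(nK)$). A secondary subtlety is that $x\mapsto -x\log x$ is not globally Lipschitz, so the sampling-term bound is only valid once the Hoeffding deviation drops below $p_{\min}$; both are asymptotic-in-$n$ requirements that should be read into the regularity hypotheses.
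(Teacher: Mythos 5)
Your proposal is correct and follows essentially the same route as the paper: the same triangle-inequality split into a multinomial sampling term (Hoeffding plus a union bound over the $K$ clusters) and a clustering term (a Chernoff lower bound on $n_{\min}$ feeding $c_2 = 2Km(n)p_{\min}$ into Corollary~\ref{corollary:rate}), finished by a union bound over the three events. The only minor differences are cosmetic --- you use the mean value theorem for $x \mapsto -x\log x$ where the paper uses an explicit log-difference decomposition (Lemma~\ref{lemma:final1}), and you make explicit the implicit regularity requirements ($np_{\min} > 2\log(nK)$ so that $m(n) > 0$) that the paper leaves unstated.
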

Most of the material used for this proof is presented in Corollary \ref{corollary:rate}. 
\begin{proof}
Consider the following equality
$$|\mathcal{E} - \hat{\mathcal{E}}| \leq |\mathcal{E} -\bar{\mathcal{E}} +\bar{\mathcal{E}}-  \hat{\mathcal{E}}| \leq |\mathcal{E} -\bar{\mathcal{E}}| + | \bar{\mathcal{E}}-  \hat{\mathcal{E}}|,$$  
 We know that there are three sufficient conditions for Equation \ref{eqn:final_the}. These are
\begin{enumerate}
    \item[C1:]$|\mathcal{E} -\bar{\mathcal{E}}| \leq  h\left(\frac{1}{p_{\min}}\right)K\sqrt{\frac{1}{2n}\log\left(2Kn\right)},$
    \item[C2:]$ \exists c_2 \text{ such that } 0 < c_2 \leq 1$ and $2Kn_{\min}/n \geq c_2,$ 
    \item[C3:] $ |\bar{\mathcal{E}}(\mathcal{T}) - \hat{\mathcal{E}}(\mathcal{T})| \leq h\left(\frac{2K}{c_2}\right) \frac{1}{c_2^4 n}.$
\end{enumerate}
Then, using union bound
\begin{align*}
    \mathbb{P}(\text{Not (\ref{eqn:final_the})}) &\leq \mathbb{P}( \text{Not C1 or Not C2 or Not C3})\\
    &\leq \mathbb{P}( \text{Not C1}) + \mathbb{P}( \text{Not C2})+ \mathbb{P}( \text{Not C3}).
\end{align*}
In Lemma \ref{lemma:final1} and \ref{lemma:Final2} of the appendix, we show that $|\mathcal{E} -\bar{\mathcal{E}}| \geq  h\left(\frac{1}{p_{\min}}\right)K\sqrt{\frac{1}{2n}\log\left(2Kn\right)}$ with probability at most $\frac{1}{n}$.

In Lemma \ref{Lemma:Final3} of the Appendix, we show that setting $c_2 = 2K\left(1-\sqrt{\frac{2\log(nK)}{np_{\min}}}\right)p_{\min}$, we have $2Kn_{\min}/n< c_2$ with probability at most $\frac{1}{n}$.

Finally, the corollary \ref{corollary:rate} tells us that $ |\bar{\mathcal{E}}(\mathcal{T}) - \hat{\mathcal{E}}(\mathcal{T})| > h\left(\frac{2K}{c_2}\right) \frac{1}{c_2^4 n}$ occurs with probability at most $\frac{1}{n}$.
\end{proof}
\begin{remark}
One observation is that the empirical entropy converges to true entropy at a rate slower than that of estimated entropy to the empirical entropy. This is natural since each $t_i$ has the opportunity to make a $n-1$ connection with other $t_j$s, resulting in $n(n-1)/2$ independent observations, whereas each generator generates only $n$ independent observations.
\end{remark}
\subsection{Discussion on $K$}
{\color{white}..} An intriguing question to consider is the rate at which \( K \), the number of clusters, can grow with \( n \), the number of texts, as it is natural to expect \( K \) to increase with \( n \). Focusing solely on the spectral clustering algorithm, the error is characterized as \( O((K + \log(K))/n) \). Thus, under the condition \( K = o(n^{1-\delta}) \) for some \( \delta > 0 \), we have \( |\bar{\mathcal{E}}(\mathcal{T}) - \hat{\mathcal{E}}(\mathcal{T})| \to 0 \) in probability. In contrast, when considering a scenario involving a generative model, a stricter condition is required. Specifically, \( K \) must satisfy \( K = o(n^{1/2 - \delta}) \), with \( \delta > 0 \), to ensure \( |\mathcal{E}(\mathcal{T}) - \hat{\mathcal{E}}(\mathcal{T})| \to 0 \) in probability.

\section{Simulation and data studies}
\label{simulation}

{\color{white} .. }As this paper focuses more on the theoretical analysis of semantic spectral entropy with respect to variable $n$ and $K$, we decide against using the evaluation method proposed in \cite{kuhn2023semantic,duan-etal-2024-shifting, lin2023generating} in favor of constructing a simulation where we know the true entropy $\bar{\mathcal{E}}$. This allows us to better analyze how $|\bar{\mathcal{E}} -\hat{\mathcal{E}}|$ changes with choice of generator $e$, $K$ and $n_{\min}$.

To construct a non-trivial simulation for this use case, we evaluate the performance of our algorithms within the context of an unordered set of elementary proposition statements that has no logical interconnections. This approach draws upon the philosophical framework defined by \citep{wittgenstein2023tractatus} in Tractatus Logico-Philosophicus, where each elementary proposition represents a singular atomic fact. Within this framework, texts containing an identical set of elementary propositions are deemed semantically equivalent. The primary advantage of this experimental design lies in its efficiency, as it facilitates the generation of thousands of samples with minimal generator propositions, all while maintaining knowledge of the ground truth.

For example, we can consider a list of things that a hypothetical individual "John" likes to do in his free time: 
\begin{itemize}
    \item Running/Jogging 
    \item ...
\end{itemize}

To generate a cluster of text from this set of hobbies, we begin by randomly selecting \( M \) items from a total of \( N \) items in the list to formulate the compound proportion. This selection process yields \( \binom{N}{M} \) potential subset of hobbies and we know that two subsets of hobbies are the same as long as their elements are the same. Next, to create individual text samples \( t_i \) within the group, we randomly permute the order of the \( M \) selected elements in the subset. This permutation process generates \( M! \) unique samples for each combination of hobbies. Finally, the hobbies are placed in their permuted order in a sentence like that in the following. 
\begin{quote}
"In his free time, John likes hobby $1$, hobby $2$, hobby $3$, ..., and hobby $M$ as his hobbies."
\end{quote}
In order to prevent models from relying on sentence structure, some of these sentences are being designed. 

We replicate this simulation setup in two different settings. In the first setting (Experiment 1), we consider ten common hobbies that a hypothetical individual engages in during their spare time. In the second setting (Experiment 2), we examine ten historical events that occurred on December 3, which we collect from Wikipedia \cite{wiki}. These two settings allow us to analyze the impact of text length by comparing long and short text scenarios.
%\cite{atil2024llm}

We utilize Microsoft Phi-3.5 \cite{abdin2024phi}, OpenAI GPT3.5-turbo (denoted as GPT) \cite{hurst2024gpt}, A21-Jamba 1.5 Mini (denoted as A21) \cite{lieber2021jurassic}, Cohere-command-r-08-2024 (denoted as Cohere) \cite{Ustun2024AyaMA}, Ministral-3B (denoted as Minstral) \cite{jiang2023mistral} and the Llama 3.2 70B model (denoted as Llama) \cite{dubey2024llama} as \( e \). These models are lightweight, off-the-shelf language models that are cost-effective to deploy and exhibit efficiency in generating outputs, thereby offsetting the computational cost of determining sermantic relationships. The exact prompt used to generate the verdict is specified in Appendix \ref{appendix_sec:prompt_engineering}. \footnote{Code are in https://github.com/yiliu9090/sermantic-spectral-entropy}

\begin{table*}[ht]
\centering
\begin{tabular}{|l|rrr|rrr|rrr|rrr|}
\toprule
\multicolumn{13}{|l|}{Experiment 1 (Hobbies)} \\
\midrule
ratio & \multicolumn{3}{|r|}{0.2,0.3,0.5} & \multicolumn{3}{r|}{0.3,0.3,0.4} & \multicolumn{3}{r|}{0.5,0.5}&$p-q$& $p$& q\\
datasize & 30 & 50 & 70 & 30 & 50 & 70 & 30 & 50 & 70 & & &\\
\midrule
LLAMA & 0.36 & 0.49 & 0.44 & 0.34 & 0.43 & 0.46 & 0.30 & 0.27 & 0.26& 0.17 &0.17 &0.00 \\
MINISTRAL & 0.22 & 0.27 & 0.13 & 0.25 & 0.23 & 0.21 & 0.14 & 0.22 & 0.21  &0.22 & 0.99 & 0.77 \\
COHERE & 0.04 & 0.02 & 0.06 & 0.02 & 0.03 & 0.00 & 0.00 & 0.00 & 0.00 & 0.55 & 0.61 & 0.05 \\
A21 & {\bf 0.05} & {\bf 0.00} & {\bf 0.00} & {\bf 0.00} & 0.01 & {\bf 0.00} & {\bf 0.00} & {\bf 0.00} & {\bf 0.00} & 0.81 & 0.96 & 0.15\\
PHI & 0.08 & 0.07 & 0.07 & 0.03 & 0.03 & 0.00 & 0.00 & 0.00 & 0.00 & 0.67 & 0.67 & 0.01\\
GPT & 0.06 & 0.02 & 0.00 & 0.01 & {\bf 0.00} &  0.00 & 0.00 & 0.00 & 0.00 & 0.80 & 0.87 & 0.07\\

\toprule
 \multicolumn{13}{|l|}{Experiment 2 (Historical Events)} \\
\midrule
LLAMA &  0.04 & 0.02 & 0.05 & 0.03 & 0.03 & 0.07 & 0.04 & 0.00 & 0.00& 0.75 & 0.75 & 0.00\\
MINISTRAL & 0.20 & 0.20 & 0.19 & 0.17 & 0.24 & 0.19 & 0.04 & 0.07 & 0.19 & 0.25 & 1.00 & 0.75\\
COHERE & 0.08 & 0.04 & 0.09 & 0.08 & 0.09 & 0.04 & 0.00 & 0.01 & 0.04 & 0.52 & 0.98 & 0.46 \\
A21 & 0.09 & 0.14 & 0.12 & 0.08 & {\bf 0.00} & 0.01 & 0.00 & 0.00 & 0.00 & 0.49 & 0.50 & 0.01 \\
PHI & 0.10 & 0.04 & 0.19 & 0.16 & 0.03 & 0.06 & 0.07 & 0.06 & 0.00 & 0.28 & 0.28 & 0.00\\
GPT & {\bf 0.02} & {\bf 0.01} & 0.06 & {\bf 0.02} & 0.04 & {\bf 0.00} & {\bf 0.00} & {\bf 0.00} & {\bf 0.00} & 0.76 & 0.96 & 0.20\\
\bottomrule

\end{tabular}
\caption{\label{tab:basic_simu} Average $|\bar{\mathcal{E}}- \hat{\mathcal{E}}|$ over simulation 10 iterations. We have three different ratio value run over three different data sizes. For $e$, we use Microsoft Phi-3.5 \cite{abdin2024phi}, OpenAI GPT3.5-turbo \cite{hurst2024gpt}, A21-Jamba 1.5 Mini \cite{lieber2021jurassic}, Cohere-command-r-08-2024 \cite{Ustun2024AyaMA}, Ministral-3B \cite{jiang2023mistral} and the Llama 3.2 70B model \cite{dubey2024llama}. We also present the average $p$, $q$ and $p-q$ observed in the experiments.  }
\end{table*}
\begin{figure}
    \centering
    \includegraphics[width=1\linewidth]{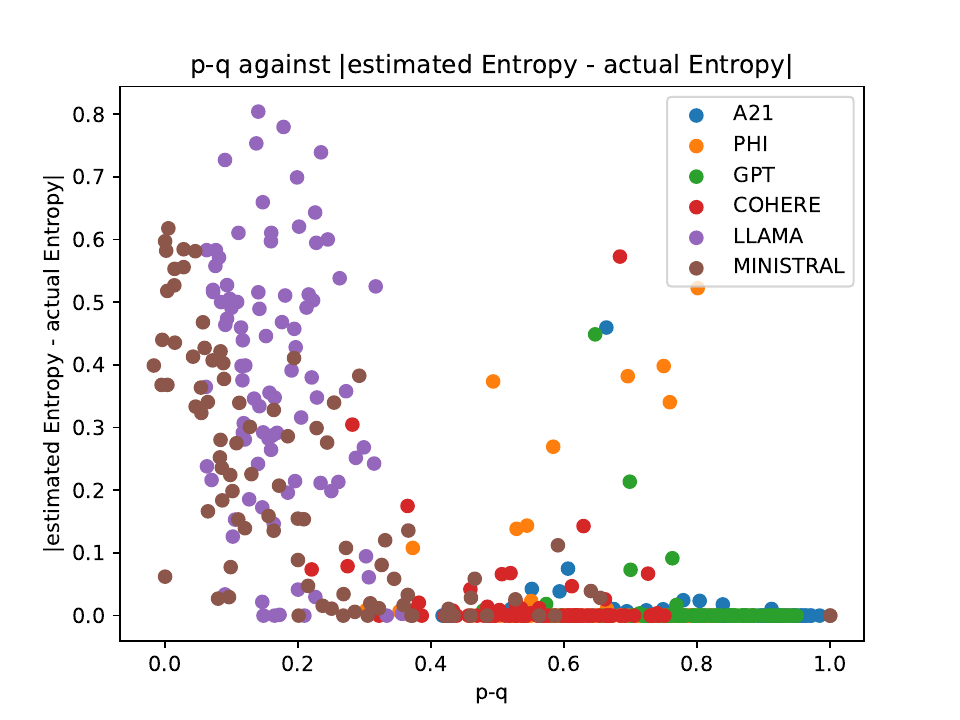}
    \caption{\label{fig:dotdata} A scatter plot of $p-q$ against $|\bar{\mathcal{E}}- \hat{\mathcal{E}}|$ for outcome in experiment 1. The different colors represents different language models used as $e$: A21 in blue, Phi in Orange, GPT in Green, Cohere in Red, Llama is Purple and Ministral in Brown. We notice that there is clear phrase change point where for $p-q <0.4$, we have that $|\bar{\mathcal{E}}- \hat{\mathcal{E}}|$ is very high most of the time, for $p-q >0.4$, $|\bar{\mathcal{E}}- \hat{\mathcal{E}}|$ is small with occasional jumps that the theory predicts.}
    
\end{figure}

We conduct simulation studies for ratio configurations of (0.2, 0.3, 0.5), (0.3, 0.3, 0.4), and (0.5, 0.5), with sample sizes of 30, 50, and 70. The average deviation, \( |\bar{\mathcal{E}}- \hat{\mathcal{E}}| \), over 10 iterations using different models as \( e \) is recorded in Table \ref{tab:basic_simu}.  

In Experiment 1, the algorithm performs strongly when using Cohere, A21, Phi, and GPT, whereas its performance is weaker with Minstral and Llama. In Experiment 2, Llama, Cohere, A21, and GPT demonstrate strong performance, while Minstral and Phi exhibit weaker results. These findings suggest that Llama performs more effectively with longer text, which we primarily attribute to differences in \( p - q \). Specifically, in Experiment 1, \( p - q \) is small for Llama and Minstral but large for Cohere, A21, Phi, and GPT. In Experiment 2, \( p - q \) is small for Phi and Minstral but large for Llama, A21, Cohere, and GPT.  

A comparative analysis of Llama and Minstral in Experiment 1, as well as Phi and Minstral in Experiment 2, indicates that neither \( p \) nor \( q \) alone has a strong effect on \( |\bar{\mathcal{E}}- \hat{\mathcal{E}}| \). Instead, when plotting \( p - q \) against \( |\bar{\mathcal{E}}- \hat{\mathcal{E}}| \) in Figure \ref{fig:dotdata}, we observe a phase change at \( p - q = 0.4 \). Specifically, for \( p - q < 0.4 \), \( |\bar{\mathcal{E}}- \hat{\mathcal{E}}| \) is relatively high, whereas for \( p - q > 0.4 \), \( |\bar{\mathcal{E}}- \hat{\mathcal{E}}| \) tends to be lower. Notably, the experimental setup—defined by the number of data points and cluster ratio—has a considerably weaker effect on \( |\bar{\mathcal{E}}- \hat{\mathcal{E}}| \).

\iffalse 
We complete simulation studies for a ratio of (0.2,0.3,0.5), (0.3, 0.3,0.4), and (0.5,0.5) and a sample size of 30, 50, 70. The average $|\bar{\mathcal{E}}- \hat{\mathcal{E}}|$ over 10 iterations using different models as $e$ is recorded in table \ref{tab:basic_simu}. For experiment 1, the performance of the algorithm using Cohere, A21, Phi, and GPT is strong while the performance of the algorithm with Minstral and Llama is weak.  For experiment 2, Llama, Cohere, A21, and GPT perform strongly while Minstral and Phi has weak performance. This demonstrates that Llama is stronger with long text. We primary attribute this to $p-q$. In experiment 1, $p-q$ is small for Llama and Minstral and large for Cohere, A21, Phi, and GPT, while in experiment 2, $p-q$ is small for Phi and Minstral and large for Llama, M21, Cohere, and GPT. Comparing Llama and Ministral in experimen 1 and Phi and Minstral in experiment 2 shows that $p$ and $q$ independently does not have a strong effect on $|\bar{\mathcal{E}}- \hat{\mathcal{E}}|$. In fact, when we plot $p-q$ against $|\bar{\mathcal{E}}- \hat{\mathcal{E}}|$ in Figure \ref{fig:dotdata}, we notice that there is phrase change at value $p-q = 0.4$. $p-q < 0.4$ $|\bar{\mathcal{E}}- \hat{\mathcal{E}}|$ is high, but $p-q > 0.4$ implies that $|\bar{\mathcal{E}}- \hat{\mathcal{E}}|$ is generally small. Interestingly, we observe that the setup of the experiment, ie the number of data points and the cluster ratio, has a much weaker effect on $|\bar{\mathcal{E}}- \hat{\mathcal{E}}|$.
\fi
\section{Discussion}
\label{conclusion}
{\color{white} .. }Many natural language processing tasks exhibit a fundamental invariance: sequences of distinct tokens can convey identical meanings. This paper introduces a theoretically grounded metric for quantifying semantic uncertainty, referred to as semantic spectral clustering. This approach reframes the challenge of measuring semantic uncertainty as a prompt-engineering problem, which can be applied to any large language model (LLM), as demonstrated through our simulation analysis. In addition, unsupervised uncertainty can offer a solution to the issue identified in prior research, where supervised uncertainty measures face challenges in handling distributional shifts.
\iffalse
\begin{table}[ht!]
    \centering
\begin{tabular}{lrrr}
\toprule
 $e$& $p-q$ & $p$ & $q$ \\
\midrule
LLAMA & 0.17 & 0.17 & 0.00 \\
MINISTRAL & 0.22 & 0.99 & 0.77 \\
COHERE & 0.55 & 0.61 & 0.05 \\
A21 & 0.81 & 0.96 & 0.15 \\
PHI & 0.67 & 0.67 & 0.01 \\
GPT & 0.80 & 0.87 & 0.07 \\
\bottomrule
\end{tabular}
\caption{\label{tab:p-q} $p$, $q$ and $p-q$.  For $e$, we use Microsoft Phi-3.5 \cite{abdin2024phi}, OpenAI GPT3.5-turbo \cite{hurst2024gpt}, A21-Jamba 1.5 Mini \cite{lieber2021jurassic}, Cohere-command-r-08-2024 \cite{Ustun2024AyaMA}, Ministral-3B \cite{jiang2023mistral} and the Llama 3.2 70B model \cite{dubey2024llama}.}
\end{table}
\fi
While we define two texts as having equivalent meaning if and only if they mutually imply one another, alternative definitions may be appropriate for specific use cases. For example, legal documents could be clustered based on the adoption of similar legal strategies, with documents grouped together if they demonstrate comparable approaches. In such scenarios, the entropy of the legal documents could also be computed to quantify their informational diversity. We have demonstrated that, provided there exists a function $e$ capable of performing the evaluation with weak accuracy, this estimator remains consistent. Given the reasoning capabilities of large language models (LLMs), we foresee numerous possibilities for extending this method to a wide range of applications.

In addition to the methodology presented, we present a theoretical analysis of the proposed algorithms by proving a theorem concerning the contraction rates of the entropy estimator and its strong consistency. Although the algorithm utilizes generative models, which are typically treated as black-boxes, we simplify the analysis by considering the outputs of these models as random variables. We demonstrate that only a few conditions on the generative are sufficient for our spectral clustering algorithm to achieve strong consistency. Our approach allows for many statistical methodologies to be applied in conjunctions with generative models to analyze text at a level previously not achievable by humans.

%\section{Acknowledgments}

% Bibliography entries for the entire Anthology, followed by custom entries
%\bibliography{anthology,custom}
% Custom bibliography entries only
\bibliography{custom}

\onecolumn
\appendix
\section{Limitation}
{\color{white} .. }We acknowledge that, while this research offers a theoretically consistent measurement of variation, it does not account for situations where two pieces of text may partially agree. For example, two texts may contain points of agreement and points of disagreement. This is particularly common when different authors cite similar sources, but reach contradictory conclusions. Computing semantic similarity in this case is difficult.
\section{Theoretical Result}
\subsection{Proof of proposition  \ref{prop:equ}}
\begin{proof} 
    To prove that the relation $t_i \sim t_j$ if $t_i$ is true if and only if $t_j$ is true is an equivalence relation, we need to meet 3 key criteria, namely symmetry, reflexivity, and Transitivity. 

    First, symmetry 
    $t_i \sim t_j$ implies that $t_j$ is true $\Leftrightarrow$ $t_j$ is true, but this also means $t_j$ is true $\Leftrightarrow$ $t_i$ is true. Then we have $t_j \sim t_i$. 

    Second, reflexivity, 
    $t_i \sim t_j$ implies $t_j$ is true $\Leftrightarrow$ $t_j$ is true. But this means that $t_j$ is true  $\Leftrightarrow$ $t_i$ is true. Then we have $t_i \sim t_j$. 
    
    Third, transitivity,
    If $t_i \sim t_j$ and $t_j \sim t_k$, Then if $t_i$ is true $\Rightarrow$ $t_j$ is true $\Rightarrow$ $t_k$ is true, which means $t_i$ is true $\Rightarrow$ $t_k$ is true. On the other hand, using the same argument, $t_k$ is true $\Rightarrow$ $t_j$ is true $\Rightarrow$ $t_i$ is true. This means the $t_k$ is true $\Rightarrow$ $t_i$ is true. Therefore $t_i \sim t_k$. 

    The three points is sufficient to demonstrate that $\sim$ is a equivalence relation. 
\end{proof}
\subsection{Proof of Theorem \ref{the:strongConsistensy}}
\label{appendix:sec:the:strongconsistency}
To prove Theorem \ref{the:strongConsistensy}, we adopt notations from \cite{su2019strong}.
Consider the adjacency matrix $E$ which is determined by a Language model. 

Let $d_i = \sum_{j=1}^n E_{ij}$ denote the degree of node $i$,  $D = \text{diag}(d_1,\cdots, d_n)$, and $L = D^{-1/2}ED^{-1/2}$ be the graph Laplacian. We also define $n_k$ be the number of text in each cluster. We denote a block probability matrix $B = B_{k_1k_2}$ where $k_1,k_2 \in\{1,\cdots K\}$ be the clusters index.  i.e. 
$$ B_{k_1 k_2} = \begin{cases}
    p \quad \text{if $k_1 = k_2$}\\
    1-q \quad \text{otherwise.}
\end{cases}$$

Let $\mathbb{E}(E) = P$ i.e. the probability of edge between $i$ and $j$ is given by $P_{ij} = B_{k_1k_2}$ if text $i$ is in $C_{k_1}$ and $j$ is in $C_{k_2}$.
Denote $Z = \{Z_{ik}\}$ be a $n\times K$  binary matrix providing the cluster membership of text $t$, i.e., $Z_{ik} = 1$ if text $i$ is in $C_k$ and $Z_{ik} = 0$ otherwise. The population version of the Laplacian is given by $\mathcal{L} = \mathcal{D}^{-1/2}P\mathcal{D}^{-1/2}$ where  $\mathcal{D} = \text{diag}(d_1 \cdots d_n)$ where $d_i =\sum_{j=1}^{n}P_{ij} = p + (n-1)(q)$.

Let $\pi_{kn} = n_k/n, W_k = \sum_{l=1}^KB_{kl}\pi_{ln}$, $\mathcal{D}_B = \text{diag}(W_1,\cdots W_K)$, and $B_0=\mathcal{D}_B^{-1/2}B\mathcal{D}_B^{-1/2}  $
%C^star = 3528C_1 c_1^{-1/2}
\begin{assumption}[Assumption 1 in \cite{su2019strong}]
\label{assumption:eigenvalues}
$P$ is rank $k$ and spectral decomposition $\Pi_{n}^{1/2}P\Pi_{n}^{1/2}$ is $S_n \Omega_n S_n^T$ in which $S_n$ is a $K \times K$ matrix such that $S_n^T S_n = I_{K\times K}$  and $\Omega_n = \text{diag}(\omega_1 \cdots \omega_{K_n})$ such that $|\omega_1|\geq |\omega_2|\geq\cdots \geq|\omega_{K_n}|$
\end{assumption}
Assumption \ref{assumption:eigenvalues} implies that the spectral decomposition $$\mathcal{L} = U_n \Sigma_n U_n^T = U_{1n}\Sigma_{1n}U_{1n}^T$$

where \(\Sigma_{n}=\operatorname{diag}\left(\sigma_{1 n}, \ldots, \sigma_{K n}, 0, \ldots, 0\right)\) is a \(n \times n\) matrix that contains the eigenvalues of \(\mathcal{L}\) such that \(\left|\sigma_{1 n}\right| \geq\left|\sigma_{2 n}\right| \geq \cdots \geq\left|\sigma_{K n}\right|>0, \Sigma_{1 n}=\operatorname{diag}\left(\sigma_{1 n}, \ldots, \sigma_{K n}\right)\), the columns of \(U_{n}\) contain the 
 eigenvectors of \(\mathcal{L}\) associated with the eigenvalues in \(\Sigma_{n}, U_{n}=\left(U_{1 n}, U_{2 n}\right)\), and \(U_{n}^{T} U_{n}=I_{n}\) \cite{su2019strong}.
\begin{assumption}[Assumption 2 in \cite{su2019strong}]
\label{assumption:limits_nk}
    There exists constant $C_1 >0$ and $c_2>0$ such that
    $$C_1 \geq \lim\sup_n\sup_k n_k K/n \geq \lim \inf_n \inf_k n_k K/n \geq c_2  $$
\end{assumption}

\begin{assumption}[Assumption 3 in \cite{su2019strong}]
\label{assumption:bound_eigenvalues}
    Let $\mu_n = \min_i d_i$ and $\rho_n = \max(\sup_{k_1k_2}[B_0]_{k_1k_2},1)$. Then $n$ sufficiently large, 
    $$ 
\frac{K \rho_{n} \log ^{1 / 2}(n)}{\mu_{n}^{1 / 2} \sigma_{K n}^{2}}\left(1+\rho_{n}+\left(\frac{1}{K}+\frac{\log (5)}{\log (n)}\right)^{1 / 2} \rho_{n}^{1 / 2}\right) \leq 10^{-8} C_{1}^{-1} c_{2}^{1 / 2} .
$$
    
\end{assumption}
Let 
$$ 
\hat{O}_{n}=\bar{U} \bar{V}^{T}
$$
where \(\bar{U} \bar{\Sigma} \bar{V}^{T}\) is the singular value decomposition of \(\hat{U}_{1 n}^{T} U_{1 n}\). we also denote \(\hat{u}_{1 i}^{T}\) and \(u_{1 i}^{T}\) as the \(i\)-th rows of \(\hat{U}_{1 n}\) and \(U_{1 n}\), respectively.

Now we present the notation of the K-means algorithm. With a little abuse of notation, let \(\hat{\beta}_{\text {in }} \in \mathbb{R}^{K}\) be a generic estimator of \(\beta_{g_{i}^{0} n} \in \mathbb{R}^{K}\) for \(i=1, \ldots, n\). To recover the community membership structure (i.e., to estimate \(g_{i}^{0}\) ), it is natural to apply the  K-means clustering algorithm to \(\left\{\widehat{\beta}_{\text {in }}\right\}\). Specifically, let \(\mathcal{A}=\left\{\alpha_{1}, \ldots, \alpha_{K}\right\}\) be a set of \(K\) arbitrary  \(K \times 1\) vectors: \(\alpha_{1}, \ldots, \alpha_{K}\). Define
\[
\widehat{Q}_{n}(\mathcal{A})=\frac{1}{n} \sum_{i=1}^{n} \min _{1 \leq l \leq K}\left\|\hat{\beta}_{i n}-\alpha_{l}\right\|^{2}
\]

and \(\widehat{\mathcal{A}}_{n}=\left\{\widehat{\alpha}_{1}, \ldots, \widehat{\alpha}_{K}\right\}\), where \(\widehat{\mathcal{A}}_{n}=\arg \min _{\mathcal{A}} \widehat{Q}_{n}(\mathcal{A})\). Then we compute the estimated cluster  identity as
\[
\hat{g}_{i}=\underset{1 \leq l \leq K}{\arg \min }\left\|\hat{\beta}_{\text {in }}-\widehat{\alpha}_{l}\right\|,
\]

where if there are multiple \(l\) 's that achieve the minimum, \(\hat{g}_{i}\) takes value of the smallest one. We then state the key assumption that relates to K-means clustering algorithm. 

\begin{assumption}[Assumption 7 in \cite{su2019strong}]
\label{assumption:K-means}
     Suppose for \(n\) sufficiently large,
     \[
15 C^{*} \frac{K \rho_{n} \log ^{1 / 2}(n)}{\mu_{n}^{1 / 2} \sigma_{K n}^{2}}\left(1+\rho_{n}+\left(\frac{1}{K}+\frac{\log (5)}{\log (n)}\right)^{1 / 2} \rho_{n}^{1 / 2}\right) \leq c_{2} C_{1}^{-1 / 2} \sqrt{2}
\]
Where \(C^{*} = 3528C_1 c_2^{-1/2} \)
\end{assumption}

\begin{theorem}(Collorary 2.2)
\label{theorem:no_error}
    Corollary 2.2. Suppose that Assumptions \ref{assumption:eigenvalues},  \ref{assumption:limits_nk}, \ref{assumption:bound_eigenvalues}, and \ref{assumption:K-means} hold and the \(K\)-means algorithm is applied  to \(\hat{\beta}_{i n}=(n / K)^{1 / 2} \hat{u}_{1 i}\) and \(\beta_{g_{i}^{0} n}=(n / K)^{1 / 2} \hat{O}_{n} u_{1 i}\) Then, 
    \[
\sup _{1 \leq i \leq n} \mathbf{1}\left\{\tilde{g}_{i} \neq g_{i}^{0}\right\}=0 \quad \text { a.s. }
\]
\end{theorem}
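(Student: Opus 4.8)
The plan is to recognize that the stated result is, verbatim, Corollary 2.2 of \cite{su2019strong}, so the work reduces to confirming that our setup is a genuine instance of theirs and then invoking their conclusion. First I would check the dictionary of objects: the adjacency matrix $E$ with $\mathbb{E}(E) = P$, the normalized Laplacians $L = D^{-1/2}ED^{-1/2}$ and $\mathcal{L} = \mathcal{D}^{-1/2}P\mathcal{D}^{-1/2}$, the membership matrix $Z$, the block matrix $B$ and its normalized version $B_0$, and the $K$-means inputs $\hat{\beta}_{in} = (n/K)^{1/2}\hat{u}_{1i}$ and $\beta_{g_i^0 n} = (n/K)^{1/2}\hat{O}_n u_{1i}$ are all introduced above exactly as in \cite{su2019strong}, and Assumptions \ref{assumption:eigenvalues}--\ref{assumption:K-means} are stated as their Assumptions 1, 2, 3, and 7. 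Granting this identification, the conclusion $\sup_{1 \le i \le n}\mathbf{1}\{\tilde{g}_i \neq g_i^0\} = 0$ a.s. is precisely their corollary, so there is nothing further to prove beyond the bookkeeping.

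For completeness I would sketch why that corollary holds, since the three ingredients are what our later theorems lean on. (i) A concentration step: under Assumption \ref{assumption:bound_eigenvalues} the degrees $d_i$ concentrate around $\mu_n$ and $E - P$ concentrates in operator norm (Chernoff/matrix-Bernstein), which together give $\|L - \mathcal{L}\|$ small on an event of probability at least $1 - O(n^{-c})$ with $c > 1$. (ii) A Davis--Kahan step: Assumption \ref{assumption:eigenvalues} ensures $\mathcal{L}$ has exactly $K$ nonzero eigenvalues, so the leading-$K$ eigenspace is well-defined, and the operator-norm closeness of $L$ and $\mathcal{L}$ transfers, with the eigengap $\sigma_{Kn}$ in the denominator, into a bound on $\|\hat{U}_{1n} - U_{1n}\hat{O}_n\|_F$ and hence on $\sum_i \|\hat{u}_{1i} - \hat{O}_n^{T}u_{1i}\|^2$. (iii) A $K$-means separation step: the population rows $\beta_{g_i^0 n}$ take only $K$ distinct values, one per cluster, mutually separated by a distance of order $\sqrt{n/(Kn_k)}$, which is bounded below by Assumption \ref{assumption:limits_nk}; Assumption \ref{assumption:K-means} is exactly the calibration guaranteeing that the perturbation from (ii) is a small constant fraction of this separation, so a $(1+\epsilon)$-approximate $K$-means solution on $\{\hat{\beta}_{in}\}$ matches every row to its correct center, i.e. $\tilde{g}_i = g_i^0$ for all $i$ on the good event.

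Finally, because the failure probability of the good event in (i)--(iii) is summable in $n$ --- the $\log n$ factors in Assumptions \ref{assumption:bound_eigenvalues} and \ref{assumption:K-means} are tuned precisely so that it decays faster than $1/n$ --- Borel--Cantelli upgrades the finite-$n$ high-probability statement to the almost-sure statement in the display. The step where real effort is needed (and where \cite{su2019strong} does its hardest work) is keeping the eigenvector perturbation in (ii) sharp enough: when community sizes are unequal a row-wise bound on $\hat{u}_{1i} - \hat{O}_n^{T}u_{1i}$, rather than merely an aggregate Frobenius bound, is required for the $K$-means separation in (iii) to close with summable failure probability. Since we import Assumptions \ref{assumption:eigenvalues}--\ref{assumption:K-means} wholesale, this obstacle is already discharged by the cited result and no new argument is required here.
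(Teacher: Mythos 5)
Your proposal matches the paper exactly: the paper states this result as Corollary 2.2 of \cite{su2019strong} and offers no proof of its own, relying on the fact that Assumptions \ref{assumption:eigenvalues}--\ref{assumption:K-means} and the $K$-means inputs are transcribed verbatim from that reference, which is precisely the identification you carry out. Your additional sketch of the concentration, Davis--Kahan, separation, and Borel--Cantelli steps is a faithful outline of the cited argument but is not required here.
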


We now have define the error of mis-classification. 

% Since there is no true $j$, we have to take all permutation of $j$ which we denote as $\sigma(j)$. 
\begin{definition}
Denote $M_\text{error} = \sum_{j} \sum_{i}\mathbb{I}(g_{ij}  \neq g^{\text{True}}_{ij})$ as the mis-classification error.
\end{definition}

\begin{lemma}
\label{lemma:error_connections}
    If $\sup_{i,j} \mathbb{I}(g_{ij}  \neq g^{\text{True}}_{ij}) = 0 \quad \text{a.s.}$, then $M_\text{error} = 0 \quad \text{a.s.}$
\end{lemma}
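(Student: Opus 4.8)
\textbf{Proof plan for Lemma \ref{lemma:error_connections}.}

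The statement is essentially a bookkeeping fact: the quantity $M_{\text{error}} = \sum_j \sum_i \mathbb{I}(g_{ij} \neq g^{\text{True}}_{ij})$ is a finite sum of indicator random variables, and if every summand vanishes almost surely then so does the sum. The plan is to make this precise by working with the appropriate events. First I would fix the index set $\{1,\dots,n\} \times \{1,\dots,K\}$, which is finite, and write $M_{\text{error}} = \sum_{i=1}^n \sum_{j=1}^K \mathbb{I}(g_{ij} \neq g^{\text{True}}_{ij})$. Each term is nonnegative, so $M_{\text{error}} = 0$ if and only if every term is zero, i.e.\ the event $\{M_{\text{error}} = 0\}$ coincides with $\bigcap_{i,j}\{g_{ij} = g^{\text{True}}_{ij}\}$.

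Next I would translate the hypothesis $\sup_{i,j}\mathbb{I}(g_{ij}\neq g^{\text{True}}_{ij}) = 0$ a.s.\ into a statement about events. Since the supremum over the finite index set equals the maximum, $\sup_{i,j}\mathbb{I}(g_{ij}\neq g^{\text{True}}_{ij}) = 0$ holds exactly on the event $\bigcap_{i,j}\{g_{ij} = g^{\text{True}}_{ij}\}$, which is precisely $\{M_{\text{error}} = 0\}$ from the previous step. The hypothesis says this event has probability $1$, so $\mathbb{P}(M_{\text{error}} = 0) = 1$, which is the conclusion. Alternatively, and perhaps cleaner to write, I would note that $0 \le M_{\text{error}} = \sum_{i,j}\mathbb{I}(g_{ij}\neq g^{\text{True}}_{ij}) \le nK \cdot \sup_{i,j}\mathbb{I}(g_{ij}\neq g^{\text{True}}_{ij})$ pointwise, and since the right-hand side is $0$ almost surely, the nonnegative random variable $M_{\text{error}}$ is squeezed to $0$ almost surely.

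There is essentially no obstacle here — the only thing to be careful about is that the index set is finite, so that "$\sup = 0$ a.s." genuinely forces "$\sum = 0$ a.s." without any measurability or countable-union subtleties; a countable intersection of probability-one events still has probability one, and here the intersection is even finite. I would also remark in passing that this lemma is the bridge connecting Theorem \ref{theorem:no_error} (which gives $\sup_i \mathbb{I}\{\tilde g_i \neq g_i^0\} = 0$ a.s., an eventual zero-misclassification guarantee) to Lemma \ref{lemma:error}, which bounds $|\hat{\mathcal{E}}(\mathcal{T}) - \bar{\mathcal{E}}(\mathcal{T})|$ in terms of $M_{\text{error}}/n$; combining the two yields the strong consistency claimed in Theorem \ref{the:strongConsistensy}.
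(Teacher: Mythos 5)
Your argument is correct and essentially identical to the paper's: both observe that over the finite index set the sum of indicators vanishes exactly when the supremum does, so the event $\{M_{\text{error}}=0\}$ coincides with the probability-one event in the hypothesis. The paper phrases this via the ``infinitely often'' event while you argue directly on $\{M_{\text{error}}=0\}$ (and via the squeeze $M_{\text{error}}\le nK\cdot\sup$), but the substance is the same.
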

\begin{proof}
Notice $\mathbb{I}(g_{ij}  \neq g^{\text{True}}_{ij})$ can only takes up value $1$ or $0$. Therefore $\sum_{j} \sum_{i}\mathbb{I}(g_{ij}\neq g^{\text{True}}_{ij}) \neq 0 \Leftrightarrow \exists i, j  \text{ s.t }\mathbb{I}(g_{ij}\neq g^{\text{True}}_{ij}) \neq 0 \Leftrightarrow  \sup_{i,j}\mathbb{I}(g_{ij}\neq g^{\text{True}}_{ij}) \neq 0$  
    \begin{align*}
        \mathbb{P}(M_\text{error} \neq 0 \text{ i.o. }) &= \mathbb{P}\left( \sum_{j} \sum_{i}\mathbb{I}(g_{ij}\neq g^{\text{True}}_{ij}) \neq 0 \text{ i.o.}\right)\\
        &= \mathbb{P}\left( \exists i, j  \text{ s.t }\mathbb{I}(g_{ij}\neq g^{\text{True}}_{ij}) \neq 0 \text{ i.o. } \right)\\
        &= \mathbb{P}\left( \sup_{i,j}\mathbb{I}(g_{ij}\neq g^{\text{True}}_{ij}) \neq 0 \text{ i.o }\right)\\
        &= 0 \quad \text{ since $\sup_{i,j} \mathbb{I}(g_{ij}  \neq g^{\text{True}}_{ij}) = 0$ \text{ a.s.}}
    \end{align*}
    Here we use the classical notation i.o. as happens infinitely often. 
\end{proof}
\begin{lemma}
    \label{lemma:misclassification}
    $\sum_j \left|\sum_{i=1}^n g_{ij}-n_j\right| \leq M_\text{error}$
\end{lemma}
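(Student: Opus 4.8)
The plan is to reduce the inequality to the triangle inequality together with a pointwise identity relating the absolute difference of two $0/1$ indicators to their disagreement indicator; no probabilistic input or regularity condition is needed, as the statement is purely combinatorial. First I would fix a cluster index $j$ and rewrite the signed count discrepancy as a sum over texts, using that $n_j = \sum_{i=1}^n \mathbb{I}(t_i \in C_j) = \sum_{i=1}^n g^{\text{True}}_{ij}$ by definition:
\[
\sum_{i=1}^n g_{ij} - n_j = \sum_{i=1}^n g_{ij} - \sum_{i=1}^n g^{\text{True}}_{ij} = \sum_{i=1}^n \left(g_{ij} - g^{\text{True}}_{ij}\right).
\]
Applying the triangle inequality gives $\left|\sum_{i=1}^n g_{ij} - n_j\right| \le \sum_{i=1}^n \left|g_{ij} - g^{\text{True}}_{ij}\right|$.

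Next I would observe that since $g_{ij}, g^{\text{True}}_{ij} \in \{0,1\}$, the difference $g_{ij} - g^{\text{True}}_{ij}$ is $0$ when the two agree and $\pm 1$ when they disagree, so $\left|g_{ij} - g^{\text{True}}_{ij}\right| = \mathbb{I}(g_{ij} \neq g^{\text{True}}_{ij})$. Substituting this and summing over $j$ yields
\[
\sum_j \left|\sum_{i=1}^n g_{ij} - n_j\right| \le \sum_j \sum_{i=1}^n \mathbb{I}(g_{ij} \neq g^{\text{True}}_{ij}) = M_\text{error},
\]
which is exactly the claimed bound.

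The only point requiring care — and the one I would flag explicitly — is the label-alignment convention. Spectral clustering recovers the partition only up to a permutation of the $K$ cluster labels, so both $g_{ij}$ and $M_\text{error}$ must be read with respect to the same matching of estimated labels to true labels (the permutation induced by $\hat{O}_n$ in the preceding development). Once that convention is in force, the two displays above go through verbatim, and the lemma follows. This is the step that feeds, via Lemma \ref{lemma:error}, into the bound on $|\bar{\mathcal{E}}(\mathcal{T}) - \hat{\mathcal{E}}(\mathcal{T})|$, so it suffices to control $M_\text{error}$ thereafter.
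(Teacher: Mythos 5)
Your proof is correct and takes essentially the same route as the paper's: the paper expands $\sum_i g_{ij}$ and $n_j$ into joint indicators of $(g_{ij}, g^{\text{True}}_{ij})$, cancels, and applies the triangle inequality, which after simplification is exactly your identity $\left|g_{ij}-g^{\text{True}}_{ij}\right| = \mathbb{I}(g_{ij}\neq g^{\text{True}}_{ij})$ combined with the triangle inequality, just written less compactly. Your explicit caveat about the label-alignment (permutation) convention is a point the paper leaves implicit but does not alter the argument.
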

\begin{proof}

\begin{align*}
\sum_j \left|\sum_{i=1}^n g_{ij}-n_j\right|
&= \sum_j \Biggl| \sum_i \mathbb{I}(g_{ij} = 1, g^{\text{True}}_{ij} = 0 ) + \mathbb{I}(g_{ij} = 1, g^{\text{True}}_{ij} = 1 ) + \mathbb{I}(g_{ij} = 0, g^{\text{True}}_{ij} = 1 ) \\
&- \mathbb{I}(g_{ij} = 0, g^{\text{True}}_{ij} = 1 )- n_j \Biggr|\\
& = \sum_j \Biggl| \sum_i \mathbb{I}(g_{ij} = 1, g^{\text{True}}_{ij} = 0 ) - \mathbb{I}(g_{ij} = 0, g^{\text{True}}_{ij} = 1 ) \\
& + \sum_i \mathbb{I}(g_{ij} = 0, g^{\text{True}}_{ij} = 1 )+ \mathbb{I}(g_{ij} = 0, g^{\text{True}}_{ij} = 1 ) - n_j \Biggr|\\
& = \sum_j \Biggl| \sum_i \mathbb{I}(g_{ij} = 1, g^{\text{True}}_{ij} = 0 ) - \mathbb{I}(g_{ij} = 0, g^{\text{True}}_{ij} = 1 ) + n_j - n_j \Biggr|\\
&= \sum_j \Biggl| \sum_i \mathbb{I}(g_{ij} = 1, g^{\text{True}}_{ij} = 0 ) -\mathbb{I}(g_{ij} = 0, g^{\text{True}}_{ij} = 1 ) \Biggr|\\
&\leq \sum_j\sum_i \mathbb{I}(g_{ij} = 1, g^{\text{True}}_{ij} = 0 ) + \mathbb{I}(g_{ij} = 0, g^{\text{True}}_{ij} = 1 )\\
&=   \sum_{j} \sum_{i}\mathbb{I}(g_{ij}  \neq g^{\text{True}}_{ij}) \\
&= M_\text{error}
\end{align*}
\end{proof}
\newpage
\subsubsection{Proof of lemma \ref{lemma:error}}
\label{Appendix:proofoflemma:error}
Now we prove lemma \ref{lemma:error}.
\begin{proof}
Recall that
\begin{itemize}
    \item $\hat{p}(C_j) = \frac{1}{n}\sum_{i=1}^n g_{ij}$ and $\hat{\mathcal{E}}(\mathcal{T}) = - \sum_{j=1}^K\hat{p}( C_j) \log(\hat{p}( C_j))$
    \item $\bar{p}(C_j) = \frac{n_j}{n}$ and $\bar{\mathcal{E}}(\mathcal{T}) = - \sum_{j=1}^K\bar{p}( C_j) \log(\bar{p}( C_j))$
\end{itemize}
\begin{align*}
    |\hat{\mathcal{E}}(\mathcal{T}) - \bar{\mathcal{E}}(\mathcal{T})| &= \left| \sum_{j=1}^k\hat{p}( C_j) \log(\hat{p}( C_j)) -  \bar{p}( C_j) \log(\bar{p}( C_j)) \right|\\
    &=  \left|\sum_{j=1}^K\hat{p}( C_j)\log(\hat{p}( C_j)) -  \hat{p}( C_j)\log(\bar{p}( C_j)) + \hat{p}( C_j)\log(\bar{p}( C_j)) -  \bar{p}( C_j) \log(\bar{p}( C_j)) \right|\\
    &= \left|\sum_{j=1}^K \hat{p}( C_j)\log\left(\frac{\hat{p}( C_j)}{\bar{p}( C_j)}\right) - \left(\hat{p}( C_j) -\bar{p}( C_j)\right)\log(\bar{p}( C_j)) \right|\\
    &=  \left|\sum_{j=1}^K \hat{p}( C_j)\log\left(\frac{\hat{p}( C_j)}{p( C_j)}\right) - \left(\hat{p}( C_j) -\bar{p}( C_j)\right)\log(\bar{p}( C_j)) \right|\\
    &\leq \left|\sum_{j=1}^K \hat{p}( C_j)\log\left(\frac{\hat{p}( C_j)}{\bar{p}( C_j)}\right)\right| + \left|\sum_{j=1}^K \left(\hat{p}( C_j) -\bar{p}( C_j)\right)\log(\bar{p}( C_j)) \right|\\
    &\leq  \left|\sum_{j=1}^K \left(\frac{\hat{p}( C_j)-\bar{p}( C_j)}{\bar{p}( C_j)}\right)\right| + \left|\sum_{j=1}^K \left(\hat{p}( C_j) -\bar{p}( C_j)\right)\log(\bar{p}( C_j)) \right|\\
    &= \left|\sum_{j=1}^K \left(\frac{\frac{1}{n}\sum_{i=1}^n g_{ij}-\bar{p}( C_j)}{\bar{p}( C_j)}\right)\right| + \left|\sum_{j=1}^K \left(\frac{1}{n}\sum_{i=1}^n g_{ij} -\bar{p}( C_j)\right)\log(\bar{p}( C_j)) \right|\\
    &= \left|\sum_{j=1}^K \left(\frac{\frac{1}{n}\left(\sum_{i=1}^n g_{ij}-n_j\right)}{\bar{p}( C_j)}\right)\right| + \left|\sum_{j=1}^K \left(\frac{1}{n}\sum_{i=1}^n g_{ij} -\bar{p}( C_j)\right)\log(\bar{p}( C_j)) \right|\\
    &\leq \sum_{j=1}^K \left|\frac{\frac{1}{n}\left(\sum_{i=1}^n g_{ij}-n_j\right)}{\bar{p}( C_j)}\right| + \sum_{j=1}^K \left|\frac{1}{n}\sum_{i=1}^n (g_{ij} - n_j)\right|\left|\log(\bar{p}( C_j)) \right|\\
    &\leq \left|\frac{\frac{2K}{n}(M_\text{error})}{c_2}\right| + \log\left(\frac{2K}{c_2}\right) \left|\frac{1}{n} (M_\text{error})\right|\\
    &= h\left(\frac{2K}{c_2}\right) \left|\frac{1}{n} (M_\text{error})\right| 
\end{align*}

where $h(x) = \left(x+\log\left(x\right)\right)$. 
\end{proof}

We prove Theorem \ref{the:strongConsistensy}. To do so, we first restate Theorem \ref{the:strongConsistensy} with all the conditions required to get to the outcome.
\begin{theorem}[Theorem \ref{the:strongConsistensy} with all conditions stated]
    Assume that Assumptions \ref{assumption:eigenvalues},  \ref{assumption:limits_nk}, \ref{assumption:bound_eigenvalues}, and \ref{assumption:K-means} hold and the \(K\)-means algorithm is applied  to \(\hat{\beta}_{i n}=(n / K)^{1 / 2} \hat{u}_{1 i}\) and \(\beta_{g_{i}^{0} n}=(n / K)^{1 / 2} \hat{O}_{n} u_{1 i}\) Then 
    \[ |\bar{\mathcal{E}}(\mathcal{T}) - \hat{\mathcal{E}}(\mathcal{T}) | \rightarrow 0 \text{ almost surely }\]
\end{theorem}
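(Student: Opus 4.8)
The plan is to chain together the two ingredients that the excerpt has already assembled: the deterministic bound from Lemma~\ref{lemma:error} relating the entropy gap to the misclustering error, and the almost-sure exact-recovery statement (Theorem~\ref{theorem:no_error}, i.e.\ Corollary~2.2 of \cite{su2019strong}) which says that under Assumptions~\ref{assumption:eigenvalues}--\ref{assumption:K-means} the spectral-plus-$K$-means procedure recovers the true community labels with no errors, almost surely. First I would invoke Theorem~\ref{theorem:no_error} to get $\sup_{1\le i\le n}\mathbf{1}\{\tilde g_i\neq g_i^0\}=0$ a.s., then apply Lemma~\ref{lemma:error_connections} to convert this into $M_{\text{error}}=0$ a.s. (this is the step that passes from a per-node indicator to the aggregated count). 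Since $M_{\text{error}}$ is a nonnegative integer random variable, ``$=0$ a.s.'' already means $\mathbb{P}(M_{\text{error}}=0\text{ eventually})=1$, so there is nothing asymptotic left to squeeze here.

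Next I would feed this into Lemma~\ref{lemma:error}. That lemma requires the regularity condition $2Kn_{\min}/n\ge c_2$ for some $0<c_2<1$; this is exactly Assumption~\ref{assumption:limits_nk} (taking $\liminf$), so it holds for $n$ large. The lemma then gives
\[
|\hat{\mathcal{E}}(\mathcal{T})-\bar{\mathcal{E}}(\mathcal{T})|\le h\!\left(\frac{2K}{c_2}\right)\left|\frac{1}{n}M_{\text{error}}\right|.
\]
On the almost-sure event where $M_{\text{error}}=0$, the right-hand side is identically $0$, so $|\bar{\mathcal{E}}(\mathcal{T})-\hat{\mathcal{E}}(\mathcal{T})|=0$ on that event, which is even stronger than convergence to $0$; a fortiori $|\bar{\mathcal{E}}(\mathcal{T})-\hat{\mathcal{E}}(\mathcal{T})|\to 0$ almost surely. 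One bookkeeping subtlety to handle is the alignment between the notation in Lemma~\ref{lemma:error} (which is phrased for a fixed $n$ with $K$ fixed) and the almost-sure framework of \cite{su2019strong} (which is a statement about a sequence of graphs); since $K$ is held fixed and $h$ is a fixed continuous function, $h(2K/c_2)$ is just a constant and poses no difficulty.

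The main obstacle, such as it is, is not analytic but a matter of correctly marshalling the hypotheses: one must check that Assumptions~\ref{assumption:eigenvalues}--\ref{assumption:K-means} of \cite{su2019strong} are implied by (or can be assumed as) the ``regularity conditions'' referenced in the statement of Theorem~\ref{the:strongConsistensy}, and in particular that the estimator $\hat{\beta}_{in}=(n/K)^{1/2}\hat u_{1i}$ used in Theorem~\ref{theorem:no_error} is the same object produced by the $(1+\epsilon)$-$K$-means step in Algorithm~\ref{algo:1} (up to the usual orthogonal-rotation ambiguity $\hat O_n$ that \cite{su2019strong} already absorbs). Once that identification is made, the proof is essentially a two-line deduction: exact recovery a.s.\ $\Rightarrow$ $M_{\text{error}}=0$ a.s.\ $\Rightarrow$ (via Lemma~\ref{lemma:error}) the entropy gap vanishes a.s. I would therefore structure the write-up as: (1) restate the applicable assumptions; (2) quote Theorem~\ref{theorem:no_error}; (3) apply Lemma~\ref{lemma:error_connections}; (4) apply Lemma~\ref{lemma:error} and conclude.
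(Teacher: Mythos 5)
Your proposal matches the paper's proof exactly: it invokes Theorem~\ref{theorem:no_error} for almost-sure exact recovery, converts this to $M_{\text{error}}=0$ a.s.\ via Lemma~\ref{lemma:error_connections}, and then concludes via the deterministic bound in Lemma~\ref{lemma:error}. No gaps; your additional remarks about the role of Assumption~\ref{assumption:limits_nk} in supplying $c_2$ and the rotation ambiguity $\hat O_n$ are correct bookkeeping that the paper leaves implicit.
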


\begin{proof}

Using Theorem \ref{theorem:no_error}, we know that under Assumptions \ref{assumption:eigenvalues},  \ref{assumption:limits_nk}, \ref{assumption:bound_eigenvalues}, and \ref{assumption:K-means}, we have that 
 \[
\sup _{1 \leq i \leq n} \mathbf{1}\left\{\tilde{g}_{i} \neq g_{i}^{0}\right\}=0 \quad \text { a.s. }
\]
Using Lemma \ref{lemma:error_connections}, we know that 

$$M_\text{error} = 0 \quad \text{a.s.}$$
Using results from Lemma \ref{lemma:error}, we know that $M_\text{error} \rightarrow 0 \quad a.s. \Rightarrow \hat{\mathcal{E}}(\mathcal{T}) \rightarrow \bar{\mathcal{E}}(\mathcal{T}) \quad a.s. $. 
\end{proof}
Now we try to prove Theorem \ref{the:finite_sample}. To do so, we state corollary 3.2 in \cite{lei2015consistency}.
\subsection{Proof of Theorem \ref{the:finite_sample}}
\label{appendix:proofofthe:finite_sample}
\begin{theorem}[Corollary 3.2 in \cite{lei2015consistency}]
\label{the:finite_sample_core}
 Let $E$ be an adjacency matrix from the $\operatorname{SBM}(Z, B)$, where $B=\alpha_{n} B_{0}$ for some $\alpha_{n} \geq \log n / n$ and with $B_{0}$ having minimum absolute eigenvalue $\geq \lambda>0$ and $\max _{k \ell} B_{0}(k, \ell)=1$. Let $g_{ij}$ be the output of spectral clustering using $(1+\varepsilon)$-approximate $k$-means. Then  there exists an absolute constant $c$ such that if 

\begin{equation*}
(2+\varepsilon) \frac{K n}{n_{\min }^{2} \lambda^{2} \alpha_{n}}<c
\end{equation*}
then with probability at least $1-n^{-1}$,
$$
\frac{1}{n}M_{\text{error}}\leq c^{-1}(2+\varepsilon) \frac{K n_{\max }}{n_{\min }^{2} \lambda^{2} \alpha_{n}}
$$
\end{theorem}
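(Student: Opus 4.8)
The plan is to follow the classical three-step route for consistency of spectral clustering in the stochastic block model: (i) control the deviation of the adjacency matrix from its mean in operator norm; (ii) transfer this control to the leading $K$-dimensional eigenspace via a Davis--Kahan perturbation bound; and (iii) convert the eigenvector perturbation into a misclustering count using the geometry of the $(1+\varepsilon)$-approximate $k$-means solution. Write $P = \mathbb{E}[E] = ZBZ^{T}$, which has rank $K$ with spectral decomposition $P = U\Sigma U^{T}$ on its nonzero eigenvalues. Two preliminary facts about $P$ are needed. First, since $B = \alpha_n B_0$ with $B_0$ having minimum absolute eigenvalue at least $\lambda$ and $Z^{T}Z = \mathrm{diag}(n_1,\dots,n_K)$ has smallest eigenvalue $n_{\min}$, the smallest nonzero eigenvalue $\gamma_n$ of $P$ is at least a constant multiple of $n_{\min}\alpha_n\lambda$. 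Second, the rows of $U$ are constant within each community, and two rows from distinct communities are separated by at least $\sqrt{2/n_{\max}}$ in Euclidean norm; this separation is what makes a misclustered node detectable.

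The first main step is a concentration bound: provided $\alpha_n \geq \log n / n$, I would show that with probability at least $1 - n^{-1}$ one has $\|E - P\| \leq c' \sqrt{n\alpha_n}$ for an absolute constant $c'$. I expect this to be the main obstacle. A naive matrix-Bernstein argument introduces spurious $\log n$ factors because each Bernoulli entry has success probability only of order $\alpha_n$; recovering the sharp $\sqrt{n\alpha_n}$ rate requires a discretization argument over an $\varepsilon$-net of the sphere that decomposes the bilinear form $x^{T}(E-P)y$ into contributions from \emph{light} index pairs (handled by a Bernstein estimate) and \emph{heavy} index pairs (handled by a combinatorial bound on the edge count inside small vertex subsets), in the spirit of Feige--Ofek. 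This is exactly the concentration input supplied by \cite{lei2015consistency}.

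Granting the operator-norm bound, the second step is routine. The Davis--Kahan $\sin\Theta$ theorem furnishes a $K \times K$ orthogonal matrix $O$ with $\|\hat{U} - UO\|_F \leq 2\sqrt{2K}\,\|E - P\| / \gamma_n$, so that $\|\hat{U} - UO\|_F^{2}$ is at most a constant multiple of $K n \alpha_n / \gamma_n^{2}$, which by the eigenvalue lower bound is at most a constant multiple of $K n / (n_{\min}^{2}\alpha_n \lambda^{2})$. The final step invokes the $(1+\varepsilon)$-approximation guarantee: the population centroid configuration is feasible for $k$-means with objective at most $\|\hat{U} - UO\|_F^{2}$, hence the approximate solution $\hat{\Theta}$ obeys $\|\hat{\Theta} - UO\|_F^{2} \leq (4 + 2\varepsilon)\|\hat{U} - UO\|_F^{2}$. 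A node can be misclustered only if its estimated row deviates from its aligned population row by at least half the minimum intercommunity separation; since that separation squared is at least $2/n_{\max}$, summing these deficits gives $M_{\text{error}}$ bounded by a constant multiple of $n_{\max}(2+\varepsilon)\|\hat{U} - UO\|_F^{2}$, and therefore by a constant multiple of $(2+\varepsilon) K n\, n_{\max} / (n_{\min}^{2}\alpha_n\lambda^{2})$. Dividing by $n$ delivers the claimed bound with $c^{-1}$ the accumulated constant. Finally, the smallness hypothesis $(2+\varepsilon)K n / (n_{\min}^{2}\lambda^{2}\alpha_n) < c$ is precisely what forces $M_{\text{error}} < n_{\min}$, which guarantees that the orthogonal alignment $O$ corresponds to a genuine permutation of the community labels, so that the row-level misclustering count coincides with the membership misclustering count $M_{\text{error}}$.
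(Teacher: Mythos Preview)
The paper does not prove this statement at all: it is imported verbatim as Corollary~3.2 from \cite{lei2015consistency} and used as a black box in the subsequent proof of Theorem~\ref{the:finite_sample}. So there is no ``paper's own proof'' to compare against here; the only relevant benchmark is the original Lei--Rinaldo argument.

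Your outline is a faithful reconstruction of that argument. The three ingredients --- the sharp operator-norm concentration $\|E-P\|\le c'\sqrt{n\alpha_n}$ obtained via the Feige--Ofek light/heavy pair decomposition, the Davis--Kahan transfer to $\|\hat U-UO\|_F$, and the conversion of Frobenius perturbation into a misclustering count using the $\sqrt{n_k^{-1}+n_\ell^{-1}}$ row separation of $U$ together with the $(1+\varepsilon)$-approximation guarantee --- are exactly the steps in \cite{lei2015consistency}, and you have correctly identified that the concentration step is the non-routine part. Your constants and the role of the smallness hypothesis (forcing $M_{\text{error}}<n_{\min}$ so that the orthogonal alignment becomes a permutation) are also in line with the original. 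In short: the proposal is correct, but strictly speaking you have done more work than the paper under review, which simply quotes the result.
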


\begin{proof}
We now prove Theorem  \ref{the:finite_sample}.

    Under the model we have, we know that minimum eigenvalue of $B$ is $\lambda$. Use theorem \ref{the:finite_sample_core} to replace $h\left(\frac{2K}{c_2}\right) \left|\frac{1}{n} (M_\text{error})\right|$ with $ h\left(\frac{2K}{c_2}\right) c^{-1}(2+\varepsilon) \frac{K n_{\max }}{n_{\min }^{2} \lambda^{2} \alpha_{n}} $ in lemma \ref{lemma:error}.

We now have to show the existence of $c$ in Theorem \ref{the:finite_sample_core}.

\begin{align*}
    &\quad 2Kn_{\min}/n \geq c_2 \\
    &\Rightarrow 1/n_{\min}^2 \leq 4K^2/n^2c_2^2\\
    &\Rightarrow (2+\epsilon)\frac{Kn}{n_{\min}^2\lambda^2 \alpha_n} \leq (2+\epsilon)\frac{4K^3 }{n\lambda^2 \alpha_nc_2^2}\leq (2+\epsilon)\frac{4K^3}{\lambda^2c_2^2}\\
    &\text{Let $c = (2+\epsilon)\frac{4K^3}{\lambda^2}c_2^2$}
\end{align*}
substitute $c$ to $ h\left(\frac{2K}{c_2}\right) c^{-1}(2+\varepsilon) \frac{K n_{\max }}{n_{\min }^{2} \lambda^{2} \alpha_{n}} $, we have that 
\begin{equation*}
|\bar{\mathcal{E}}(\mathcal{T}) - \hat{\mathcal{E}}(\mathcal{T}) |  \leq h\left(\frac{2K}{c_2}\right) \frac{n_{\max }}{4c_2^2n_{\min }^{2} \alpha_{n}K^2}
\end{equation*}

\end{proof}
\subsubsection{Proof of Corollary \ref{corollary:rate}}
\label{appendix:proofofcorollary:rate}
\begin{proof}
Now we prove Corollary \ref{corollary:rate}. Note that $n \geq n_{\max} \geq n_{\min} \geq nc_2/2K$.

\begin{equation*}
|\bar{\mathcal{E}}(\mathcal{T}) - \hat{\mathcal{E}}(\mathcal{T}) |  \leq h\left(\frac{2K}{c_2}\right) \frac{n_{\max }}{4c_2^2n_{\min }^{2} \alpha_{n}K^2}\leq h\left(\frac{2K}{c_2}\right) \frac{1}{c_2^4 \alpha n}
\end{equation*}
\end{proof}

\newpage
\begin{lemma}
\label{lemma:final1}
    $$|\mathcal{E} - \hat{\mathcal{E}}| \leq \sum_{j=1}^K \left( \left| \frac{p(C_j) - \bar{p}(C_j)}{p(C_j)}\right| + \log\left(\frac{1}{p(C_j)}\right)\left| p(C_j) - \bar{p}(C_j)\right|\right) + h\left(\frac{2K}{c_2}\right) \left|\frac{1}{n} (M_\text{error})\right| $$
\end{lemma}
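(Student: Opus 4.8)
The plan is to bound $|\mathcal{E} - \hat{\mathcal{E}}|$ by inserting the empirical entropy $\bar{\mathcal{E}}$ as an intermediate quantity and applying the triangle inequality: $|\mathcal{E} - \hat{\mathcal{E}}| \leq |\mathcal{E} - \bar{\mathcal{E}}| + |\bar{\mathcal{E}} - \hat{\mathcal{E}}|$. The second term is already controlled by Lemma \ref{lemma:error}, which gives $|\bar{\mathcal{E}}(\mathcal{T}) - \hat{\mathcal{E}}(\mathcal{T})| \leq h\left(\frac{2K}{c_2}\right)\left|\frac{1}{n}(M_\text{error})\right|$, so this piece appears verbatim in the claimed bound. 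It remains to bound the first term $|\mathcal{E} - \bar{\mathcal{E}}|$, the gap between the true entropy (computed from the population cluster probabilities $p(C_j)$) and the empirical entropy (computed from $\bar{p}(C_j) = n_j/n$).

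First I would carry out the same algebraic decomposition used in the proof of Lemma \ref{lemma:error}, but now comparing $p(C_j)$ against $\bar{p}(C_j)$ rather than $\bar{p}(C_j)$ against $\hat{p}(C_j)$. Writing $\mathcal{E} = -\sum_j p(C_j)\log p(C_j)$ and $\bar{\mathcal{E}} = -\sum_j \bar{p}(C_j)\log\bar{p}(C_j)$, I would add and subtract the cross term $\bar{p}(C_j)\log p(C_j)$ inside the sum, so that
\[
|\mathcal{E} - \bar{\mathcal{E}}| = \left|\sum_{j=1}^K \bar{p}(C_j)\log\!\left(\frac{\bar{p}(C_j)}{p(C_j)}\right) - \left(\bar{p}(C_j) - p(C_j)\right)\log p(C_j)\right|.
\]
Applying the triangle inequality splits this into two sums; on the first I would use the elementary bound $\log x \leq x - 1$ (equivalently $\bar p \log(\bar p/p) \le (\bar p - p) + \text{something controlled}$, or more directly $|\log(\bar p/p)| \le |\bar p - p|/p$ when $\bar p$ is close to $p$) to obtain the term $\left|\frac{p(C_j) - \bar{p}(C_j)}{p(C_j)}\right|$, and on the second the factor $|\log p(C_j)| = \log(1/p(C_j))$ comes out directly. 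Summing over $j$ yields exactly $\sum_{j=1}^K\left(\left|\frac{p(C_j) - \bar{p}(C_j)}{p(C_j)}\right| + \log\left(\frac{1}{p(C_j)}\right)\left|p(C_j) - \bar{p}(C_j)\right|\right)$, and combining with the Lemma \ref{lemma:error} bound gives the statement.

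The main obstacle is the handling of the $\bar{p}(C_j)\log(\bar{p}(C_j)/p(C_j))$ term: the clean inequality $|\log(\bar p / p)| \le |\bar p - p|/p$ is only valid in one direction (it is the bound $\log(1+x) \le x$ with $x = (\bar p - p)/p$, which needs $\bar p \ge p$, or a two-sided version requires $\bar p$ bounded away from $0$). In the proof of Lemma \ref{lemma:error} this subtlety was glossed over by writing $\bar p \log(\bar p/\bar p)$-style manipulations; here I would either invoke the same (slightly informal) step for consistency with the rest of the paper, or note that on the high-probability event where concentration holds $\bar p(C_j)$ is within a constant factor of $p(C_j)$, making the bound rigorous. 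The rest is bookkeeping: this lemma is purely deterministic (an inequality that holds for any realization), and the probabilistic content — bounding each $|p(C_j) - \bar{p}(C_j)|$ via Hoeffding and bounding $M_\text{error}$ via Corollary \ref{corollary:rate} — is deferred to the subsequent lemmas \ref{lemma:Final2} and \ref{Lemma:Final3} and the proof of Theorem \ref{the:final} already given.
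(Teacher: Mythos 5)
Your proposal is correct and matches the paper's proof essentially step for step: triangle inequality through $\bar{\mathcal{E}}$, Lemma \ref{lemma:error} for the second term, and an add-and-subtract of the cross term $\bar{p}(C_j)\log p(C_j)$ followed by the elementary logarithm bound for the first term. The subtlety you flag about $\bar{p}\left|\log(\bar{p}/p)\right| \leq |\bar{p}-p|/p$ is in fact two-sided here — apply $\log x \leq x-1$ in whichever direction makes the argument of the logarithm exceed $1$, then use $\bar{p}\leq 1$ and $p\leq 1$ to absorb the resulting factor — so no high-probability event is needed and the lemma is, as you say, purely deterministic.
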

\begin{proof}
    First, we have that 
    $$|\mathcal{E} - \hat{\mathcal{E}}| \leq |\mathcal{E} -\bar{\mathcal{E}} +\bar{\mathcal{E}}-  \hat{\mathcal{E}}| \leq |\mathcal{E} -\bar{\mathcal{E}}| + | \bar{\mathcal{E}}-  \hat{\mathcal{E}}| \leq |\mathcal{E} -\bar{\mathcal{E}}| + h\left(\frac{2K}{c_2}\right) \left|\frac{1}{n} (M_\text{error})\right| $$
    Next, 
    \begin{align*}
        |\mathcal{E} -\bar{\mathcal{E}}| &\leq  \left| \sum_{j=1}^kp( C_j) \log(p( C_j)) -  \bar{p}( C_j) \log(\bar{p}( C_j)) \right|\\  
        &\leq \left| \sum_{j=1}^kp( C_j) \log(p( C_j)) -  \bar{p}( C_j) \log(p( C_j)) + \bar{p}( C_j) \log(p( C_j)) - \bar{p}( C_j) \log(\bar{p}( C_j)) \right|\\
        &\leq \sum_{j=1}^k \left|p( C_j) \log(p( C_j)) -  \bar{p}( C_j) \log(p( C_j)) \right| + \left|\bar{p}( C_j) \log(p( C_j)) - \bar{p}( C_j) \log(\bar{p}( C_j)) \right| \\
        &\leq \sum_{j=1}^k \left|p( C_j)  -  \bar{p}( C_j)\right| \log\left(\frac{1}{p( C_j)}\right)  + \left| \frac{p( C_j) -  \bar{p}( C_j)}{p( C_j)} \right|
    \end{align*}
\end{proof}

\begin{lemma}
\label{lemma:Final2}
With probability at least $1-\frac{1}{n}$,
$$\sum_{j=1}^k \left|p( C_j)  -  \bar{p}( C_j)\right| \leq K\sqrt{\frac{1}{2n}\log(2Kn)} $$
\end{lemma}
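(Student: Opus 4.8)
The plan is to prove the bound $\sum_{j=1}^K |p(C_j) - \bar{p}(C_j)| \leq K\sqrt{\frac{1}{2n}\log(2Kn)}$ with probability at least $1 - \frac{1}{n}$ via a concentration argument applied coordinate-wise. Recall that under the generative model $t_i \sim G$ with $t_i \in C_j$ independently with probability $p(C_j)$, the empirical frequency is $\bar{p}(C_j) = \frac{1}{n}\sum_{i=1}^n \mathbb{I}(t_i \in C_j)$, so $n\bar{p}(C_j)$ is a sum of $n$ i.i.d.\ Bernoulli$(p(C_j))$ random variables with mean $n p(C_j)$. This is exactly the setting for Hoeffding's inequality.

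First I would fix a single cluster index $j$ and apply Hoeffding's inequality to the bounded random variables $\mathbb{I}(t_i \in C_j) \in [0,1]$: for any $\varepsilon > 0$,
\[
\mathbb{P}\left(\left|\bar{p}(C_j) - p(C_j)\right| \geq \varepsilon\right) \leq 2\exp(-2n\varepsilon^2).
\]
Next I would choose $\varepsilon = \varepsilon_n := \sqrt{\frac{1}{2n}\log(2Kn)}$, which makes the right-hand side equal to $2\exp(-\log(2Kn)) = \frac{1}{Kn}$. Then I would take a union bound over the $K$ cluster indices to get
\[
\mathbb{P}\left(\exists\, j \in \{1,\dots,K\} : \left|\bar{p}(C_j) - p(C_j)\right| \geq \varepsilon_n\right) \leq K \cdot \frac{1}{Kn} = \frac{1}{n}.
\]
On the complementary event, which has probability at least $1 - \frac{1}{n}$, every coordinate satisfies $|p(C_j) - \bar{p}(C_j)| < \varepsilon_n$, so summing over $j$ gives $\sum_{j=1}^K |p(C_j) - \bar{p}(C_j)| \leq K\varepsilon_n = K\sqrt{\frac{1}{2n}\log(2Kn)}$, which is the claimed bound.

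There is no serious obstacle here — the argument is a textbook Hoeffding-plus-union-bound calculation, and the only thing requiring any care is the bookkeeping to verify that the choice $\varepsilon_n = \sqrt{\frac{1}{2n}\log(2Kn)}$ produces exactly a per-coordinate failure probability of $\frac{1}{Kn}$ so that the union bound closes at $\frac{1}{n}$. One minor point worth stating explicitly is the independence of the indicators $\mathbb{I}(t_1 \in C_j), \dots, \mathbb{I}(t_n \in C_j)$ across $i$, which follows from the i.i.d.\ assumption on the $t_i$ under $G$; this is what licenses Hoeffding's inequality. If one wanted to be fully self-contained one could instead invoke the additive Chernoff/Hoeffding bound for binomials directly, but the Hoeffding route keeps notation lightest.
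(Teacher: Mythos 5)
Your proof is correct and follows essentially the same route as the paper's: a per-cluster Hoeffding bound with the threshold $\varepsilon_n = \sqrt{\tfrac{1}{2n}\log(2Kn)}$ chosen so that a union bound over the $K$ clusters yields total failure probability $\tfrac{1}{n}$, followed by summing the coordinate-wise bounds. The only cosmetic difference is that you work with the proportions $\bar{p}(C_j)$ directly while the paper phrases the same inequality in terms of the counts $n_j$; your bookkeeping of the factor of $2$ into the per-coordinate failure probability $\tfrac{1}{Kn}$ is, if anything, slightly cleaner than the paper's.
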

\begin{proof}

       $$ \left|p( C_j)  -  \bar{p}( C_j)\right| = \frac{1}{n}\left|np( C_j)  -  n_j\right|$$
Now use Hoeffding bound, we notice that for any $j$
$$\mathbb{P}(|n_j - np(C_j)| \geq \delta) \leq 2\exp\left(-\frac{2\delta^2}{n}\right) $$
Using union bound 
$$\mathbb{P}(\exists j \text{ such that }|n_j - np(C_j)| \geq \delta) \leq \sum_{j=1}^K\mathbb{P}(|n_j - np(C_j)| \geq \delta) \leq 2K\exp\left(-\frac{2\delta^2}{n}\right) $$

$\exists j \text{ such that }|n_j - np(C_j)| \geq \delta \Leftarrow\max |n_j - np(C_j)| \geq \delta \Leftarrow \sum_{j=1}^K |n_j - np(C_j)| \geq K\delta.$ 

Now, let $ 2K\exp\left(-\frac{2\delta^2}{n}\right) = \frac{1}{n}$, we have that $\delta = \sqrt{\frac{n}{2}\log(2Kn)}$

This gives us that with probability at least $1-\frac{1}{n}$,

$$ \sum_{j=1}^k \left|p( C_j)  -  \bar{p}( C_j)\right| \leq K\sqrt{\frac{1}{2n}\log(2Kn)} $$ 
\end{proof}
\begin{lemma}
\label{Lemma:Final3}
With probability at least $1-\frac{1}{n}$
$$n_{\min} \geq \frac{nc_2}{2K}$$
where $c_2 = 2K\left(1-\sqrt{\frac{2\log(nK)}{np_{\min}}}\right)p_{\min}$ and $p_{\min} = \min \{p(C_1) \dots p(C_K) \}$
\end{lemma}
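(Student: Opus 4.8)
The plan is to use the generative-model assumption from this subsection: under it, $n_j = \sum_{i=1}^n \mathbb{I}(t_i \in C_j)$ is a sum of $n$ independent $\mathrm{Bernoulli}(p(C_j))$ variables, so $n_j$ concentrates around its mean $n p(C_j) \geq n p_{\min}$. Substituting the prescribed value of $c_2$, the target reduces to $\frac{n c_2}{2K} = n p_{\min}\bigl(1 - \sqrt{2\log(nK)/(n p_{\min})}\bigr) = n p_{\min} - \sqrt{2 n p_{\min}\log(nK)}$, so it suffices to prove a lower-tail bound $n_j \geq n p_{\min} - \sqrt{2 n p_{\min}\log(nK)}$ uniformly over $j=1,\dots,K$ on an event of probability at least $1-\tfrac1n$.

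First I would apply the multiplicative Chernoff lower-tail bound: for $\delta \in (0,1)$, $\mathbb{P}\bigl(n_j \leq (1-\delta) n p(C_j)\bigr) \leq \exp\bigl(-\delta^2 n p(C_j)/2\bigr)$. Taking $\delta = \delta_j := \sqrt{2\log(nK)/(n p(C_j))}$ makes the right-hand side exactly $\exp(-\log(nK)) = \tfrac{1}{nK}$, and on the complementary event $n_j \geq n p(C_j) - \sqrt{2 n p(C_j)\log(nK)}$. A union bound over $j=1,\dots,K$ then gives, with probability at least $1 - K\cdot\tfrac{1}{nK} = 1 - \tfrac1n$, the simultaneous bound $n_j \geq n p(C_j) - \sqrt{2 n p(C_j)\log(nK)}$ for every $j$. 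This is the same Chernoff-plus-union-bound template already used in Lemma~\ref{lemma:Final2}.

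Second, I would pass from $p(C_j)$ to $p_{\min}$ by monotonicity. The function $g(y) = y - \sqrt{2\log(nK)}\,\sqrt{y}$ satisfies $g'(y) = 1 - \sqrt{\log(nK)/(2y)} \geq 0$ for $y \geq \log(nK)/2$; in the nontrivial regime $n p_{\min} \geq \log(nK)/2$ (otherwise $c_2 \leq 0$ and the statement holds trivially since $n_{\min}\geq 0$), so $n p(C_j) \geq n p_{\min} \geq \log(nK)/2$ and hence $n_j \geq g(n p(C_j)) \geq g(n p_{\min}) = n p_{\min} - \sqrt{2 n p_{\min}\log(nK)} = \tfrac{n c_2}{2K}$. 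Taking the minimum over $j$ on this event yields $n_{\min} \geq \tfrac{n c_2}{2K}$.

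The main obstacle I anticipate is purely bookkeeping at the two boundary regimes: ensuring $\delta_j < 1$ (equivalently $n p(C_j) > 2\log(nK)$) so the Chernoff bound is non-vacuous, and verifying the monotonicity threshold $n p_{\min} \geq \log(nK)/2$. Both hold precisely when the conclusion is non-trivial, i.e. when $c_2 > 0$, so they are cosmetic rather than substantive; the only genuinely quantitative choice is calibrating $\delta_j$ to hit failure probability $\tfrac{1}{nK}$ per cluster, after which the union bound closes the argument.
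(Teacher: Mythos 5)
Your proposal is correct and follows essentially the same route as the paper: a multiplicative Chernoff lower-tail bound on each $n_j$ followed by a union bound over the $K$ clusters, with the deviation parameter calibrated so the total failure probability is $\tfrac1n$, yielding exactly $\delta=\sqrt{2\log(nK)/(np_{\min})}$ and hence the stated $c_2$. The only differences are cosmetic—you calibrate a per-cluster $\delta_j$ and then pass to $p_{\min}$ via monotonicity of $y\mapsto y-\sqrt{2y\log(nK)}$, whereas the paper uses a single $\delta$ tied to $p_{\min}$ directly; your write-up is in fact slightly more careful, since the paper's displayed Chernoff inequality omits the $\delta^2$ factor in the exponent (a typo, as its final calibration of $\delta$ matches the correct bound you state).
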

\begin{proof}
    Using the Chernoff inequality, we have $$\mathbb{P}\left(n_j \leq (1-\delta)np(C_j)\right) \leq \exp\left(\frac{-np(C_j)}{2}\right)$$
Using the union bound
$$\mathbb{P}(n_{\min} \leq nc_2/2K) \leq \mathbb{P}\left(\exists j \text{ such that }n_j \leq (1-\delta)np(C_j)\right) \leq K\exp\left(\frac{-np_{\min}}{2}\right) $$
Let $K\exp\left(\frac{-np_{\min}}{2}\right) = \frac{1}{n}$, we get $\delta = \sqrt{\frac{2\log(nK)}{np_{\min}}}.$
Finally, we have $c_2 = 2K\left(1-\sqrt{\frac{2\log(nK)}{np_{\min}}}\right)p_{\min}$ 

\end{proof}
\newpage
\section{Simulations}
\subsection{Hobby Examples}
We can consider a list of things that a hypothetical individual "John" likes to do in his free time: 
\begin{itemize}
    \item running / jogging 
    \item Drone flying / pilot Aerial drones
    \item jazzercise / aerobics
    \item making pottery / making ceramics
    \item water gardening / aquatic gardening
    \item caving / spelunking / potholing
    \item cycling / bicycling / biking
    \item reading
    \item writing journals / journal writings/ journaling
    \item sculling / rowing
\end{itemize}

\subsection{Historical Examples}
On the day December 3,
\begin{itemize}
    \item "Pope John X crowns Berengar I of Italy as Holy Roman Emperor"/ "Berengar I of Italy was crowned Emperor by Pope John X."
    \item "USS Alfred becomes the first vessel to fly the Grand Union Flag; the flag is hoisted by John Paul Jones."/
         "John Paul Jones hoisted the the Grand Union Flag on USS Alfred, the first vessel to fly Grand Union Flag."
    \item "French General Jean Victor Marie Moreau decisively defeats the Archduke John of Austria near Munich in the Battle of Hohenlinden. "/"Archduke John of Austria was defeated by French General Jean Victor Marie Moreau near Munich in the Battle of Hohenlinden."
    \item "Illinois becomes the 21st U.S. state."/ "Illinois joined U.S. as its 21st state."
    \item "The Zollverein (German Customs Union) begins the first regular census in Germany"/"The first regular census was conducted by The Zollverein (German Customs Union) in Germany."
    \item "The Duquesne Country and Athletic Club defeats an all-star collection of early football players 16–0."
    \item "Following more than a month of Turkish–Armenian War, the Turkish-dictated Treaty of Alexandropol is concluded."/
          "The Turkish-dictated Treaty of Alexandropol concluded after a month of Turkish–Armenian War."
    \item "President Herbert Hoover delivers his first State of the Union message to Congress. It is presented in the form of a written message rather than a speech."/"President Herbert Hoover presents a written message as his first State of the Union message to Congress rather than a speech."
    \item "The current flag of Singapore is adopted, six months after Singapore became self-governing within the British Empire."/"Singapore adopts it current flag six months after it self-govern within the British Empire."
    \item "Ayatollah Ruhollah Khomeini becomes the first Supreme Leader of Iran.", "Iran has its first Supreme Leader: Ayatollah Ruhollah Khomeini."
\end{itemize}

\newpage
\section{Prompt}
\label{appendix_sec:prompt_engineering}
This is the prompt we inserted for "Phi-3-mini-4k-instruct", "AI21-Jamba-1.5-Mini", "Cohere-command-r-08-2024".

\begin{verbatim}
'''
    You are a expert in logical deduction and you are given 2 piece of texts: TEXT A and TEXT B. 
    You are to identify if TEXT A implies TEXT B and TEXT B implies TEXT A at the same time. 
    
    TEXT A: 
    {text_A}
    
    TEXT B:
    {text_B}
    
    ## OUTPUT
    You are to return TRUE if TEXT A implies TEXT B and TEXT B implies TEXT A at the same time. 
    otherwise, you are to return FALSE 
'''
\end{verbatim}

This is the prompt we inserted for "Ministral-3B","Llama-3.3-70B-Instruct", "gpt-35-turbo"

\begin{verbatim}
''' 
    You are a expert in logical deduction and you are given 2 piece of texts: TEXT A and TEXT B. 
    You are to identify if TEXT A implies TEXT B and TEXT B implies TEXT A at the same time. 
    
    TEXT A: 
    {text_A}
    
    TEXT B:
    {text_B}
    
    ## OUTPUT
    You are to return TRUE if TEXT A implies TEXT B and TEXT B implies TEXT A at the same time. 
    otherwise, you are to return FALSE 
    
    ##FORMAT:
    START with either TRUE or FALSE, then detail your reasoning
'''
\end{verbatim}
\section{Notes on Implementation in python}
The default spectral clustering algorithm in python already implemented in \texttt{k++} algorithm \cite{choo2020k}. 

\end{document}